\def\eqref#1{equation~\ref{#1}}
\def\1{\bm{1}}
\def\eps{{\epsilon}}
\def\vb{{\bm{b}}}
\def\vp{{\bm{p}}}
\def\vw{{\bm{w}}}
\def\vx{{\bm{x}}}
\def\vz{{\bm{z}}}
\DeclareMathAlphabet{\mathsfit}{\encodingdefault}{\sfdefault}{m}{sl}
\SetMathAlphabet{\mathsfit}{bold}{\encodingdefault}{\sfdefault}{bx}{n}
\newcommand{\R}{\mathbb{R}}
\newcommand{\red}[1]{#1}
\newtheorem{theorem}{Theorem}[section]
\newtheorem{lemma}[theorem]{Lemma}
\newtheorem{definition}[theorem]{Definition}
\newtheorem*{example*}{Example}
\renewcommand{\paragraph}[1]{\noindent\textbf{#1}}
\newcommand{\prob}{\mathbb{P}}
\renewcommand{\eps}{\varepsilon}
\title{Private Federated Multiclass Post-hoc Calibration}
\author{Samuel Maddock, Graham Cormode, Carsten Maple \\ University of Warwick
}
\begin{document}

\maketitle

\maketitle

\begin{abstract}
    Calibrating machine learning models so that predicted probabilities better reflect the true outcome frequencies is crucial for reliable decision-making across many applications. 
    In Federated Learning (FL), the goal is to train a global model on data which is distributed across multiple clients and cannot be centralized due to privacy concerns. 
    FL is applied in key areas such as healthcare and finance where calibration is strongly required, yet federated private calibration has been largely overlooked. 
    This work introduces the integration of post-hoc model calibration techniques within FL. Specifically, we transfer traditional centralized calibration methods such as histogram binning and temperature scaling into federated environments and define new methods to operate them under strong client heterogeneity.
    We study (1) a federated setting and (2) a user-level Differential Privacy (DP) setting and demonstrate how both federation and DP impacts calibration accuracy.
    We propose strategies to mitigate degradation commonly observed under heterogeneity and our findings highlight that our federated temperature scaling works best for DP-FL whereas our weighted binning approach is best when DP is not required.

\end{abstract}

\section{Introduction}\label{sec:intro}

Federated Learning (FL) is a decentralized approach to machine learning that allows models to be trained across distributed devices without requiring private data to be collected in a single location \citep{mcmahan2016federated, kairouz2019advances}. Rather than sending raw data to a server, clients perform local work and share model updates, such as gradients. %
This paradigm enables organizations to train models without having users send sensitive data and is well-suited to distributed environments where privacy is critical, such as mobile applications \citep{hard2018federated, xu2022training}, healthcare systems \citep{xu2021federated}, and financial services \citep{long2020federated}. However, if applied in isolation, FL is not sufficient to protect the privacy of client data since it is vulnerable to attacks that leak sensitive information \citep{fowl2021robbing}. As such, it is commonly combined with Differential Privacy (DP) which adds carefully calibrated statistical noise into the training process \citep{dwork2006calibrating}. %

Most work in the federated setting has focused on aspects of model training, such as advancing federated optimization algorithms \citep{wang2019matcha, karimireddy2020scaffold, wu2020safa}, or improving the communication efficiency of training \citep{alistarh2017qsgd, chen2021communication}. However, in practical scenarios, model training is a small part of the overall ML pipeline. In many cases, federated deployments require additional solutions for the monitoring and improvement of existing models such as feature selection \citep{banerjee2021fed}, computing federated
evaluation metrics %
\citep{cormode2022federated} and post-processing for fairness \citep{chen2023privacy}.

In this work, we focus on the task of \textit{post-hoc classifier calibration in the federated setting}. Calibration refers to the process of adjusting the output probabilities of a model to better reflect the true likelihood (confidence) of the predicted outcomes. Many applications of FL need to be able to trust model predictions as reliable estimates and thus require calibration: medical diagnosis, risk assessment scenarios (e.g., insurance) and finance (e.g., fraud detection). 
However, it is observed that modern neural networks (NNs) are typically not well-calibrated. 
That is, they are often extremely over or under-confident in their predictions \citep{guo2017calibration}. 

We focus on calibration that is performed post-hoc once the model is trained. 
\red{A post-hoc} calibration model takes the output probabilities (or logits) of a trained %
classifier and learns a map to produce calibrated probabilities. %
One of the simplest approaches to (non-federated) post-hoc calibration is \textit{histogram binning} \citep{zadrozny2002transforming}. %
More sophisticated calibration methods involve scaling the final output logits of the neural network such as \textit{temperature scaling}  \citep{guo2017calibration}. 
Further extensions adapt scaling approaches to be better suited to multi-class classification \citep{kull2019beyond, zhao2021calibrating}.
Some central calibration methods integrate directly into model training~\citep{mukhoti2020calibrating, jung2023scaling, marx2024calibration}.
However, we study post-hoc calibration which is more flexible, and allows calibrating pre-trained models to data in federated settings.

The challenge of federated calibration, with and without Differential Privacy (DP), remains largely unexplored. 
\citet{bu2023convergence} examine model calibration in the central setting and demonstrate that training models with DP exacerbates calibration error, highlighting the importance of calibration for private models.
\citet{peng2024fedcal}~investigate scaling methods for a federated setting under strong data heterogeneity but crucially do not address how to incorporate  DP. 
The most closely related work is due to \citet{cormode2022federated}, who motivate DP-FL calibration. They describe a private histogram binning approach, %
but their method is restricted to binary classifiers and operates under \textit{example-level DP}. In contrast, federated models are typically trained via the stricter notion of \textit{user-level DP} and are often multi-class, which is the more challenging version to tackle. 
Their work does not address the impact of heterogeneity, which we study and design novel techniques for. 

\noindent
\textbf{Our contributions.}
We propose novel methods to perform model calibration in federated heterogeneous environments, 
analyzing both private and non-private variants.
For multiclass tasks, measures such as expected classwise error (ECE) can oversimplify the problem \citep{kull2019beyond}, so we seek to minimize the \textit{classwise} calibration error (cwECE) for multi-class classification, with emphasis on ensuring user-level Differential Privacy (DP) in the private setting. 
Table \ref{tab:related_work} summarizes our work in comparison to related literature. 
\uline{We are the first to handle cwECE in the most challenging non-IID user-level DP-FL settings}, in contrast to prior studies that study IID binary classifiers or omit DP.
Our main findings are as follows:

\textbf{1.} We show naively applying existing methods to non-IID FL settings does not solve the problem, causing model accuracy to decrease after calibration. \uline{We propose two novel frameworks for federated post-hoc calibration} which alleviate these issues, \textbf{FedBinning} (Alg \ref{alg:fedbin}) and \textbf{FedScaling} (Alg \ref{alg:fedscale}). 

\textbf{2.} \red{In the FL setting,} motivated by the failures of naive methods, we modify federated binning via a weighting scheme that is applied by the server with no additional overhead. 
We conclude that \uline{our tailored enhancements outperform scaling approaches under strong heterogeneity}. 
Extending our analysis to the DP-FL setting, we create user-level DP variations and demonstrate that \uline{our proposed scaling methods outperform binning} due to the noise and clipping required for user-level DP binning. 

\textbf{3.} Our findings are validated by  experiments on seven benchmark datasets under two forms of data heterogeneity. We see that \uline{our approaches perform calibration effectively in heterogeneous federated settings}. %

\section{Related Work}\label{sec:related_work}

\begin{table*}[t]
    \caption{Our work compared to related private or federated calibration literature.\label{tab:related_work}}
    \centering
    \small
    \begin{tabular}{llllllllll}
    \toprule
    Name & Federated & Differential Privacy & Multiclass & cwECE & Non-IID \\
    \hline \\ 
    \cite{cormode2022federated} & \checkmark & \checkmark (Example-level) & $\times$ & $\times$ &  $\times$ \\
    \cite{bu2023convergence} & $\times$ & \checkmark (Example-level) & \checkmark & \checkmark & $\times$ \\
    \cite{peng2024fedcal} & \checkmark & $\times$ & \checkmark & $\times$ & \checkmark \\
    Our Work & \checkmark & \checkmark (User-level) & \checkmark & \checkmark & \checkmark \\
    \bottomrule
    \end{tabular}
    \vspace{-18pt}
\end{table*}

\paragraph{Model Calibration.} Classifier calibration is a well-studied area for neural networks. \citet{guo2017calibration} highlight that common model architectures tend to be extremely over or under-confident in their predictions. Many approaches to post-hoc calibration exist, including classical approaches such as Platt scaling \citep{platt1999probabilistic}, isotonic regression \citep{chakravarti1989isotonic} and histogram binning \citep{zadrozny2002transforming} with extensions such as Bayesian averaging via BBQ \citep{naeini2015obtaining}. \citet{guo2017calibration} show that temperature scaling, a simple approach that scales the output logits of the neural network, greatly reduces calibration error against a variety of post-processing calibrators . %
A recent line of work proposes an alternative to post-hoc calibrators via train-time augmentations to the training loss. 
These modifications usually include a regularizer that aims to minimize a proxy for calibration error during model training. 
Examples include \citet{mukhoti2020calibrating} who use the focal loss and \citet{marx2024calibration} who add a regularizer based on distribution matching via Maximum Mean Discrepancy (MMD). %
\red{We consider train-time methods out of scope, as there are no DP-FL approaches and our focus is strictly on how to federate post-hoc calibrators.}

\paragraph{Privacy and Calibration.} DP is known to cause properties of NNs to worsen, such as accuracy~\citep{sander2023tan} and fairness~\citep{cummings2019compatibility, bagdasaryan2019differential}, where the noise addition and clipping from DP-SGD \citep{abadi2016deep} can worsen biases already present in the dataset. 
The only prior work that studies central privacy and its effects on calibration is that of \citet{bu2023convergence}. %
They show models trained via DP-SGD have increased calibration error due to the noise addition and gradient clipping. They propose a %
centrally private temperature scaling algorithm trained via DP-SGD as an extra post-processing step after model training, and demonstrate this effectively lowers the calibration error.

\paragraph{Federated Calibration.} Few works have studied model calibration in FL. \citet{luo2021no} consider the problem of developing prototypes to calibrate local loss functions to avoid heterogeneity and improve model convergence. \cite{lee2024improvinglocaltrainingfederated} apply logit chilling in the federated setting via local temperature scaling as a way to improve convergence speeds. Crucially, both works do not explicitly measure the calibration error of the resulting federated models. 
Recently, \cite{peng2024fedcal}~looks at post-processing calibrators in a heterogeneous federated setting. 
They %
propose FedCal, a scaling approach that trains a local multilayer perceptron (MLP). 
They evaluate the top-label ECE of the final model whereas we argue that classwise-ECE is better suited for federated multi-class problems and study the problem under privacy. 
Last, \citet{chu2024unlocking}
regularize the local loss function for calibration in federated training. 
Their approach relies on estimating the similarity between local and global models. 
In contrast to our work, they do not study post-hoc calibrators or private calibration.

\paragraph{DP-FL Calibration.} Calibration under both DP and FL is mostly unexplored. 
The closest work is by \citet{cormode2022federated} who study model calibration and evaluation metrics in DP-FL. 
However, they restrict to binning methods for binary classifiers, with example-level DP and do not study client heterogeneity (i.e., from label-skew) and client subsampling. 
In contrast, we focus on multi-class classifiers calibrated in a heterogeneous federated setting with user-level DP.

\section{Preliminaries}\label{sec:prelim}

\paragraph{\red{Differential Privacy \& Federated Learning.}}
\label{sec:fl}
We assume a horizontal federated setting where $K$ distributed participants (e.g., mobile devices) each have a local dataset $D_1, \dots, D_K$ with the full dataset denoted $D := \cup_k D_k$. We also assume the local datasets exhibit some form of heterogeneity, i.e., the datasets are not IID. 
We describe how we model this empirically in Section \ref{sec:exp}. We consider a multi-class classification setting where client $k$ has $n_k$ data samples with $d$ features and a target variable $y^k_i \in \mathcal{Y}$ where $|\mathcal{Y}| = c$ is the total number of classes. We are interested in training a global model $\theta(\vx; \vw)$ with learned weights $\vw$ to make predictions of the form $f(\vx ; \vw) = \sigma(\theta(\vx; \vw))$ where $\sigma$ is the softmax function. 
Models are trained in the federated setting using algorithms such as 
FedAvg \citep{mcmahan2017communication}. 
At step $t$ of FedAvg training, participants are subsampled with probability $p$ to train their local model with the current global model weights $\vw^t$ via local SGD for a number of epochs to produce client weights $\vw_k^t$. This is sent back to the server which updates the global model via\footnote{This formula assumes the server learning rate is $1$---see Appendix \ref{appendix:fedavg}.
} $\vw^{t+1} = \frac{1}{p\cdot K}\sum_k \vw_k^{t}$ over all provided client weights at round $t$. This is often wrapped inside a lightweight cryptographic secure-aggregation protocol, SecAgg~\citep{bonawitz2017practical}, which allows the server to aggregate $\vw^{t+1}$ without each client revealing their individual $\vw_k^t$.

\red{SecAgg alone is not enough to guarantee privacy on the output of federated computations \citep{fowl2021robbing}.} Differential privacy \citep{dwork2006calibrating} is a formal definition which guarantees the output of an algorithm does not depend too heavily on any one individual. We aim to guarantee user-level $(\eps,\delta)$-DP, where $\eps$ is named the \emph{privacy budget} and determines an upper bound on the privacy leakage of a differentially private algorithm. The parameter $\delta$ is set very small and corresponds to the probability of failing to meet the DP guarantee.

\begin{definition}[$(\varepsilon, \delta)$-DP] 
A randomized mechanism $\mathcal{M}$ is $(\eps, \delta)$-differentially private if for any neighboring datasets $D, D^\prime$ and any possible subset of outputs $S$ we have
    $\prob(\mathcal{M}(D) \in S) \leq e^\eps \prob(\mathcal{M}(D^\prime) \in S) + \delta$
\end{definition}

We assume \emph{user-level privacy}, which says two datasets $D$, $D^\prime$ are adjacent if $D^\prime$ can be formed as the addition/removal of a single user's data in $D$. In FL, to guarantee user-level DP, clients must clip contributions to ensure bounded sensitivity. For DP-FedAvg \citep{mcmahan2017learning}, the model updates sent from clients are clipped to have norm $C$ and therefore global sensitivity $C$. We show how to extend our methods to user-level DP in Section~\ref{sec:dp} with full details in Appendix \ref{appendix:dp}.

\noindent
\paragraph{Threat Model.} We assume an honest-but-curious model where participants do not trust others with their private data, including their model updates. In addition, we assume there is a central server that follows the algorithm exactly. 
We will use a secure-aggregation protocol \citep{bell2020secure} to allow clients and the server to aggregate local updates. 
When enforcing the use of DP, we assume the central server adds the necessary privacy noise to the aggregated output of secure-aggregation.

\paragraph{Calibration.}
We focus on calibration of neural networks for multi-class classification. The goal is to achieve perfect calibration of such a classifier.

\begin{definition}[Perfect Calibration]
Specifically, for labels $Y \in \mathcal{Y} = \{1, \dots, c\}$ and a predictor $\hat P$ with confidence predictions $\hat p$ we define \textit{perfect calibration} as $\prob(\hat Y = Y | \hat P = \hat p) = \hat p, \forall \hat p \in [0,1]$.
\end{definition}
As this requires knowledge of the underlying joint distribution it is impossible to achieve in practice. 
Instead we rely on empirical measurements of how well-calibrated a classifier is.
For a single target class, 
the most common metric is the Expected Calibration Error (ECE) which partitions predicted confidences $\hat p \in [0,1]$ into bins and averages the difference between accuracy and predicted confidence within each bin.
Several works have studied alternative measures of calibration \citep{gupta2021top, blasiok2023smooth}. 
We apply a stricter measure for the multiclass setting called classwise-ECE (cwECE) \citep{kull2019beyond}.
\begin{definition}[Classwise-ECE]
For a fixed partitioning of $[0,1]$ into $B_1, \cdots, B_M$ bins, the cwECE is defined as $\frac{1}{c}\sum_{j=1}^c \sum_{m=1}^M \frac{|B_{m}|}{N} | \hat p_j(m) - y_j(m)|$ where $N$ is the total number of samples, $\hat p_j(m) := \frac{1}{|B_m|}\sum_{i \in B_m} \hat p_{i,j}$ is the average class $j$ confidence of examples in $B_m$ and $y_j(m)$ is the proportion of class $j$ examples in $B_m$. 
\end{definition}

Our focus is on how to federate post-hoc calibrators. Here, the goal is to learn a map $g : \R \rightarrow [0,1]$ as a form of post-processing over the predictions of a model (i.e., logits or probabilities) which outputs calibrated probabilities. The simplest approach is based on histogram binning for binary classification~\citep{zadrozny2002transforming}. The uncalibrated predicted probabilities $\hat p_i$ are partitioned into $M$ bins 
and each bin is remapped to a new confidence score.
Further extensions use Bayesian Binning into Quantiles (BBQ) which averages multiple binning schemes to produce the final calibrator. See Appendix  \ref{appendix:binning} and \ref{appendix:bbq} for full details of these methods. More recently, \citet{guo2017calibration} proposed a class of calibrators known as scaling methods. These involve learning a linear transformation of the output logits of the neural networks (and not raw probabilities). Scaling approaches take the form
$ g(\vz_i) = \sigma(A \vz_i + \vb) $
where $\vz_i$ is the output logits for example $\vx_i$. 

In our federated setting, we assume that each client partitions their local dataset $D_k$ into three sets; train, calibration and test. 
We measure the cwECE of the global model produced from federated training computed over these federated test sets as a measure of global cwECE. This is in contrast to \citet{peng2024fedcal}. who study a global form of top-label ECE.
In practice, we only need clients to have a train and calibration set, which is easily achieved by splitting clients' local data into two.

\section{Federated Calibration}\label{sec:fedcal}
We focus on two classes of methods: histogram binning and scaling.
Section \ref{sec:bin} highlights the issues with naively extending binning to FL and puts forward our weight aggregation strategy to alleviate this. Section \ref{sec:scale} shows how to federate scaling methods, again overcoming difficulties due to heterogeneity. 
Section \ref{sec:dp} proposes extensions to our framework for user-level DP.

\begin{algorithm}[t]
\caption{Federated Multiclass Binning}\label{alg:fedbin}
\begin{algorithmic}
\small
\STATE {\bfseries Input:} Local datasets $D_1, \dots, D_K$, sampling rate $p$, trained model $\theta$, $M$ bins, $T$ rounds, (optional) $(\eps, \delta)$
\FOR{each round $t=1, \dots T$}
    \STATE Server samples participants with probability $p$ to form the participation set $\mathcal{P}_t$
    \STATE Partition $[0,1]$ into $M$ fixed-width bins, $B_m := [\frac{m-1}{M}, \frac{m}{M}]$ 
    \FOR{each client $k \in \mathcal{P}_t$}
        \FOR{each class $j \in [c]$}
            \STATE{Compute local histograms over positive and negative class $j$ examples with confidences $\{\hat p_{i,j}\}_{i \in D_k}$ $$P_j^k(m) := \sum_{i \in D_k} \mathbf{1}\{y_i = j  \wedge \hat p_{i,j} \in B_m\},\quad N_j^k(m) := \sum_{i \in D_k} \mathbf{1}\{y_i \neq j\ \wedge \hat p_{i,j} \in B_m\} $$}
        \ENDFOR
        \STATE Client $k$ sends each local class histogram $\{P_j^k$, $N_j^k\}$ to the server via SecAgg \citep{bell2020secure}
    \ENDFOR
    \STATE{Server aggregates classwise histograms $P_j = \sum_{k \in \mathcal{P}_t} P_j^k, N_j = \sum_{k \in \mathcal{P}_t} N_j^k$}
    \STATE{\textbf{Optional:} Server clips each $\{P_j, N_j\}$ and adds Gaussian noise to guarantee $(\eps,\delta)$-DP, see Sec \ref{sec:dp}}
    \STATE{For each class $j \in [c]$, the server forms the one-vs-all binning calibrator of the form $$\textstyle g_j(\hat p) := \sum_{m} \1\{\hat p \in B_m\} \frac{P_j(m)}{P_j(m) + N_j(m)}$$}
    \STATE{Return $g(\hat \vp_i) := (g_1(\hat p_{i,1}), \dots, g_c(\hat p_{i,c})) / \sum_{j=1}^c g_j(\hat p_{i,j}) $}
\ENDFOR
\end{algorithmic}
\end{algorithm}
\begin{algorithm}[t]
\caption{Federated Scaling}\label{alg:fedscale}
\begin{algorithmic}
\small
\STATE {\bfseries Input:} $K$ participants with data $D_1, \dots, D_K$, sampling rate $p$, $T$ calibration rounds, (optional) $(\eps, \delta)$
\FOR{each global round $t=1, \cdots, T$}
    \STATE Server samples participants with probability $p$ to form the participation set $\mathcal{P}_t$
    \FOR{each client $k \in \mathcal{P}_t$}
        \STATE Client trains a local scaling calibrator
        $g_k(\vz) := \sigma(A\vz + \vb)$ over their logits $\{\vz_i\}_{i \in D_k}$
        \STATE Client sends final local scaling parameters $A_k, \vb_k$ to the server via SecAgg \citep{bell2020secure}
    \ENDFOR
    \STATE{Server aggregates and averages scaling parameters, $A_t = \frac{1}{|\mathcal{P}_t|}\sum_{k \in \mathcal{P}_t} A_k, \vb_t = \frac{1}{|\mathcal{P}_t|}\sum_{k \in \mathcal{P}_t} \vb_k$}
    \STATE{\textbf{Optional:} Server clips $A_t, \vb_t$ and adds DP noise calibrated to $(\eps,\delta)$-DP over $T$ rounds, see Sec \ref{sec:dp}
}
\ENDFOR
\STATE{Return final calibrator of the form $g(\vz) = \sigma(A_T \cdot \vz + \vb_T)$}
\end{algorithmic}
\end{algorithm}

\subsection{Histogram Binning Approaches}\label{sec:bin}
Histogram binning is usually applied in a binary setting ($c=2$ classes). 
The goal is to map the model's predicted probabilities to more accurate confidence estimates based on the actual outcomes observed. 
This process is carried out by partitioning the interval $[0,1]$ into bins and replacing the scores of examples in a bin by the average accuracy over all examples within that bin.
See Appendix \ref{appendix:binning} for a formal treatment of (centralized) histogram binning.
\citet{cormode2022federated} propose a translation of binning to the federated setting. 
Their idea is to build histograms of positive and negative examples over the bins. 
This yields two histograms built from client responses that are combined 
via secure-aggregation \citep{bell2020secure}. 
Once the server receives these histograms it can build the calibrator by computing the accuracies within each bin.

\paragraph{Multiclass binning technique.}
We turn the problem into $c$ one-vs-all calibrators.
That is, we learn a binning calibrator $g_j$ for each class $j$. 
To do so, each client computes and sends $2c$ histograms---the positive and negative histograms for each class. 
The server receives the sum of all these histograms, which define $c$ binary binning calibrators. 
The final prediction is the normalized probability distribution from each one-vs-all calibrator.
We present the framework for federated binning in Algorithm~\ref{alg:fedbin}.

Directly implementing this approach has some highly undesirable properties in the multi-class federated setting.
We refer to this naive baseline as FedBin. 
To demonstrate the drawbacks, we applied FedBin to calibrate a simple CNN trained over CIFAR10 with significant label-skew (see Section~\ref{sec:exp} and Appendix~\ref{appendix:hetero} for full details) with 100 clients and 10 clients participating per-round.
Figure~\ref{fig1_combined} shows that the baseline binning approach cannot simultaneously achieve low cwECE and good accuracy for high skew (small $\beta$). 
Figure \ref{fig1_c} shows the cause of this failure mode.  
It plots the change in average training accuracy on clients' local datasets after they perform local calibration next to the training accuracy change on the global dataset. 
Letting clients train local calibrators when they have significant skew causes overfitting: a big \textit{increase} in (local) training accuracy but a large \textit{reduction} in (global) test accuracy. 
This overfitting causes significant problems when the federated binner is trained over very few rounds, creating a global calibrator that significantly decreases the overall test accuracy, which is an unacceptable tradeoff.

\paragraph{Addressing Heterogeneity.} 
Our solution to this overfitting problem is for the server to use a weighting scheme when combining the client histograms. %
The high-level idea is to have the server output a calibrator whose prediction is a weighted average between the base model's confidence and that of the federated binning calibrator for each class $j$.
We define the weighted calibrator as 
$\tilde g_j(\hat p) = \alpha_j \cdot g_j(\hat p) + (1-\alpha_j) \cdot \hat p $,
where $\alpha_j = \text{clip}(\frac{\tilde N_j}{N_j}, 1)$, $\tilde N_j$ is the total number of class $j$ examples aggregated so far and $N_j$ is the total number of class $j$ examples in the dataset. This balances reliance on the federated calibrator, ensuring stability under label skew or a small number of training rounds.

\paragraph{Instantiating Algorithm \ref{alg:fedbin}: FedBBQ.}
The last part of our solution is for the server to combine information at multiple bin granularities. 
In the central setting, the best binning-based calibrator, BBQ, fuses multiple different histograms, by applying Bayesian averaging
\citep{naeini2015obtaining}. 
We instantiate this averaging in the federated model by materializing a collection of histograms with different bin counts. 
Specifically, 
we build a histogram with $2^M$ total bins (via Algorithm~\ref{alg:fedbin}) and form $M$ different histograms by merging bins. This forms $M$ calibrators which are averaged using the Bayesian weighting of BBQ to produce a final calibrator.
We apply our weighting scheme for addressing heterogeneity to the final output. 
We refer to the final combined protocol as FedBBQ, and give full details in Appendix \ref{appendix:bbq}.
Subsequently, we only show results for FedBBQ, as it outperformed the %
single unweighted histogram binning method FedBin. We fix $M=7$ (128 total bins) as this has consistent results across all settings, based on our ablation study (summarized in Appendix~\ref{appendix:bins}).

\begin{figure*}[t!]
\centering
  \subfloat[Binning: cwECE \& test acc \label{fig1_combined}]{%
       \includegraphics[width=0.28\linewidth]{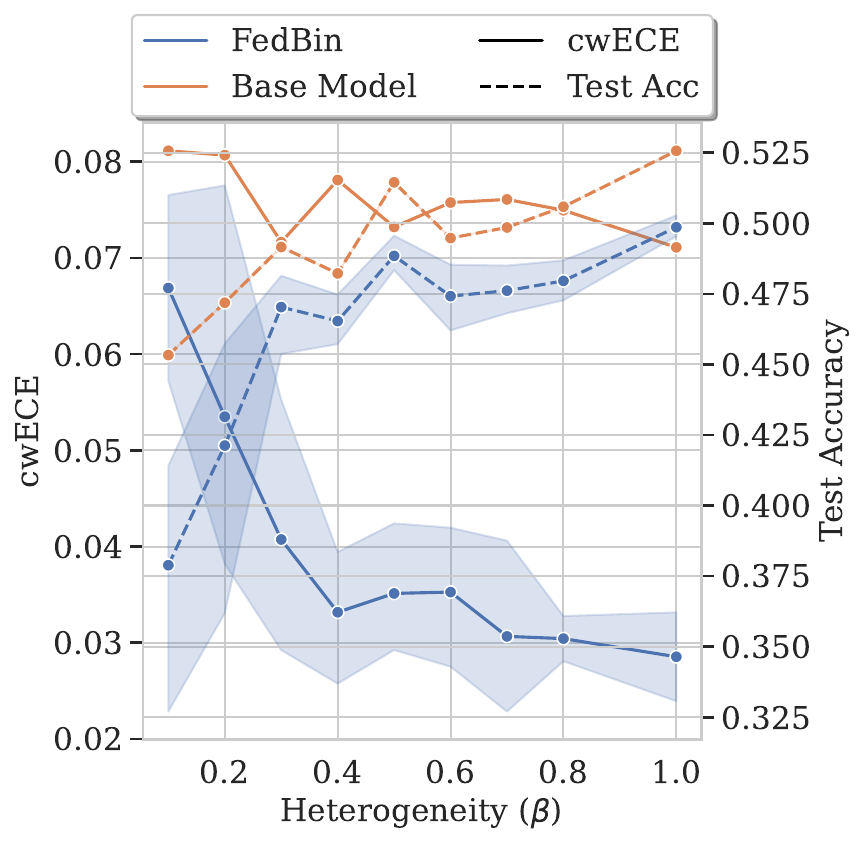}}
  \subfloat[Binning: Local predictions\label{fig1_c}]{%
        \includegraphics[width=0.28\linewidth]{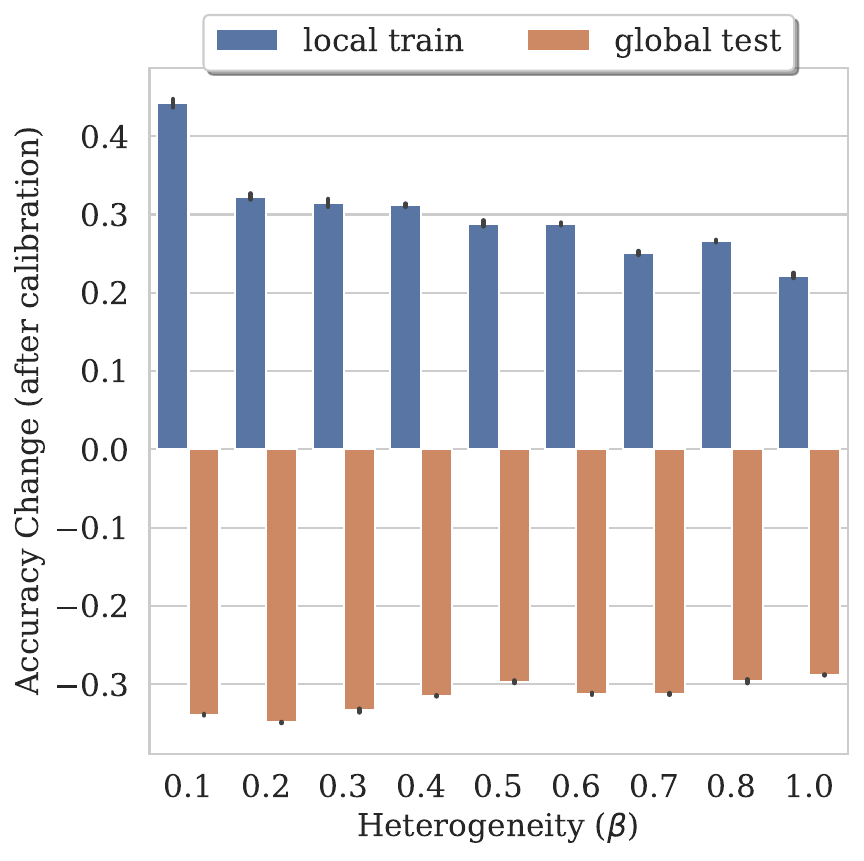}}
  \subfloat[Scaling: Local predictions\label{fig2_c}]{%
       \includegraphics[width=0.28\linewidth]{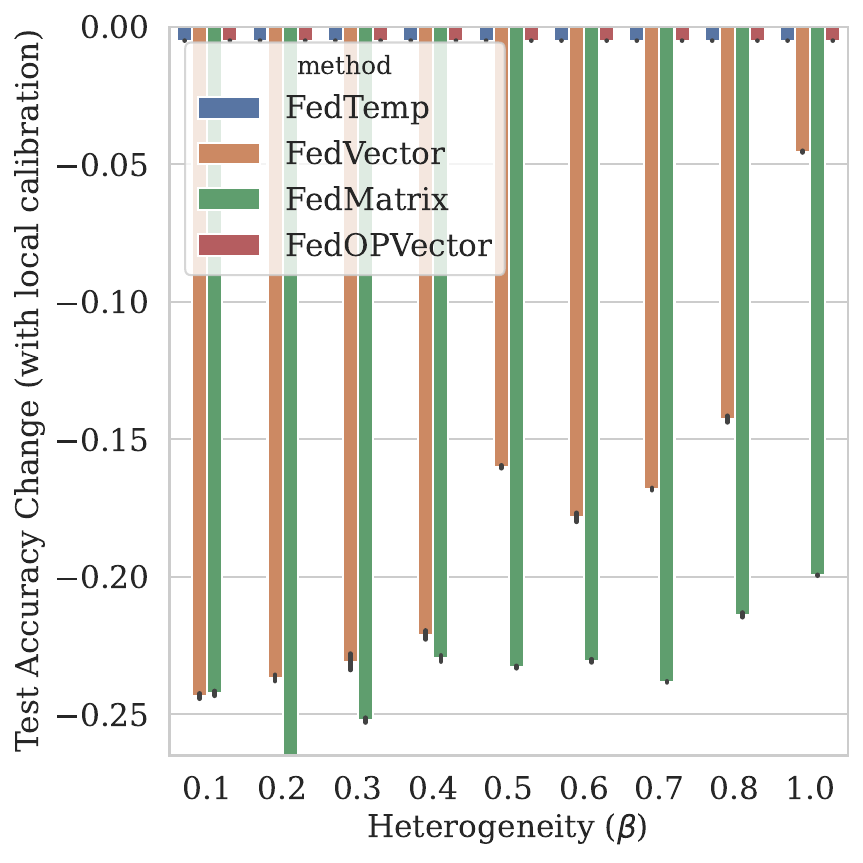}} \\
  \caption{Naive federated calibration on CIFAR10 (Simple CNN), varying heterogeneity $\beta$.\label{fig:1}}
\end{figure*}
\begin{figure*}[t!]
\centering
    \begin{center}
        \includegraphics[width=0.9\linewidth]{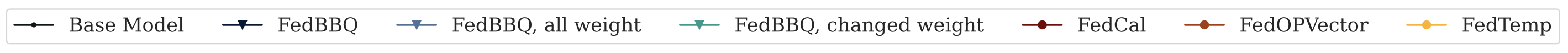} \vspace{-6mm}
    \end{center}
  \subfloat[Varying $T$: cwECE \label{fig3_ece}]{%
    \includegraphics[width=0.28\linewidth]{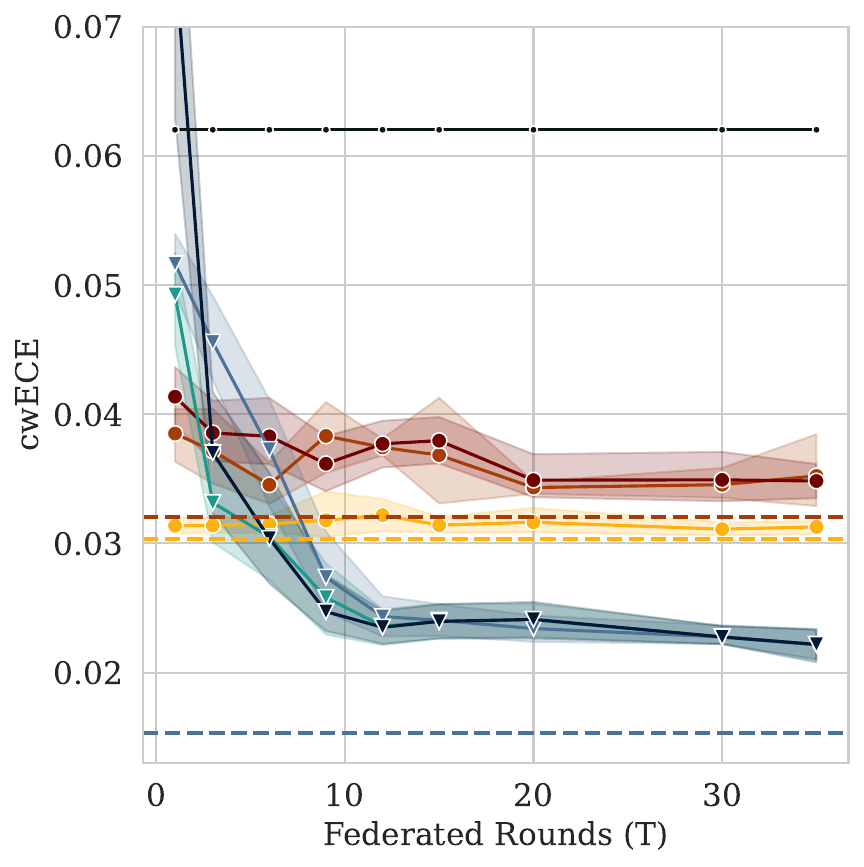}}
  \subfloat[Varying $T$: Test Accuracy \label{fig3_acc}]{%
        \includegraphics[width=0.28\linewidth]{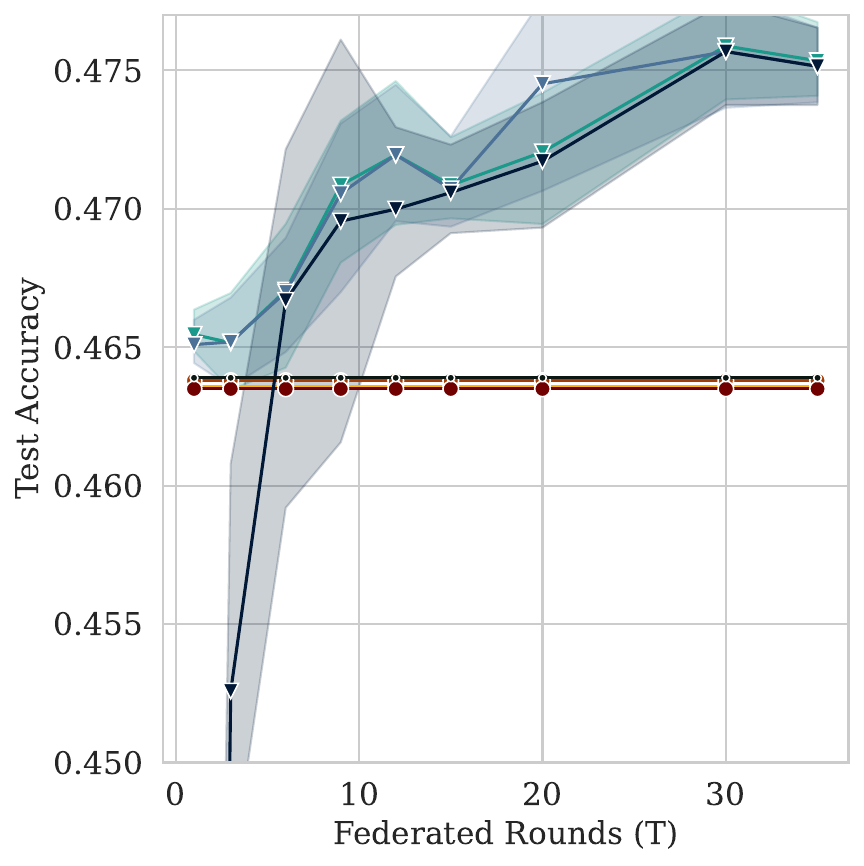}}
  \subfloat[Varying $\beta$: cwECE \label{fig3_beta}]{%
       \includegraphics[width=0.28\linewidth]{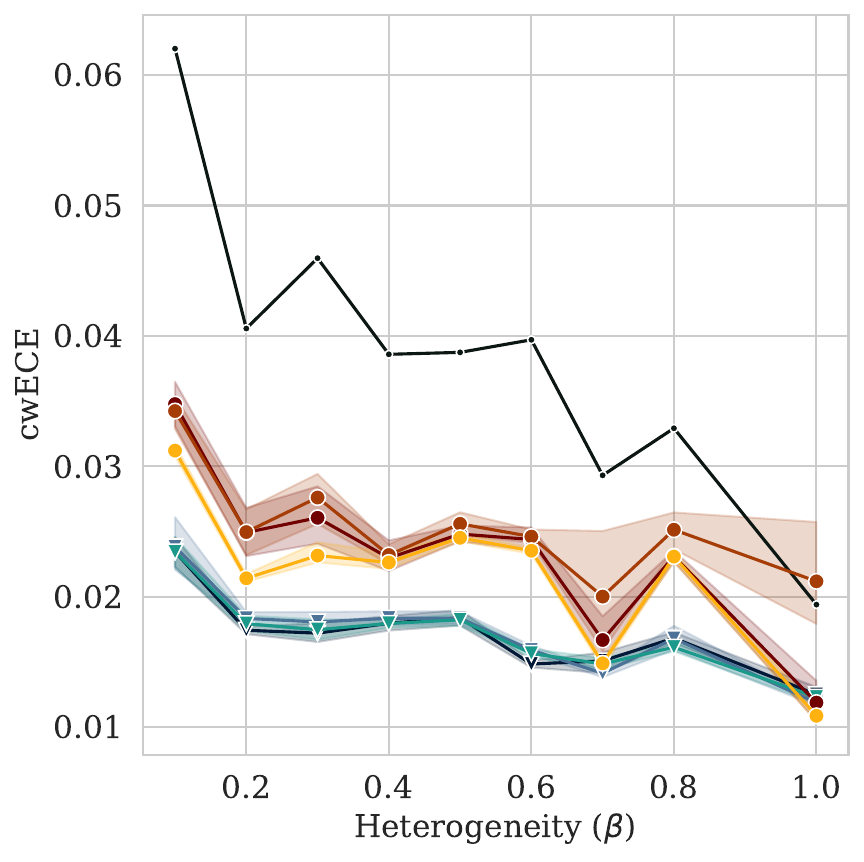}} \\
  \caption{FL Calibration on CIFAR10 (Simple CNN), $\beta=0.1$ unless otherwise stated.}
  \vspace{-4mm}
\end{figure*}
\begin{figure*}[t!]
\centering
    \begin{center}
        \includegraphics[width=0.9\linewidth]{figures/legend.png} \vspace{-6mm}
    \end{center}
  \subfloat[Varying $\eps$: cwECE \label{fig4_ece}]{%
       \includegraphics[width=0.28\linewidth]{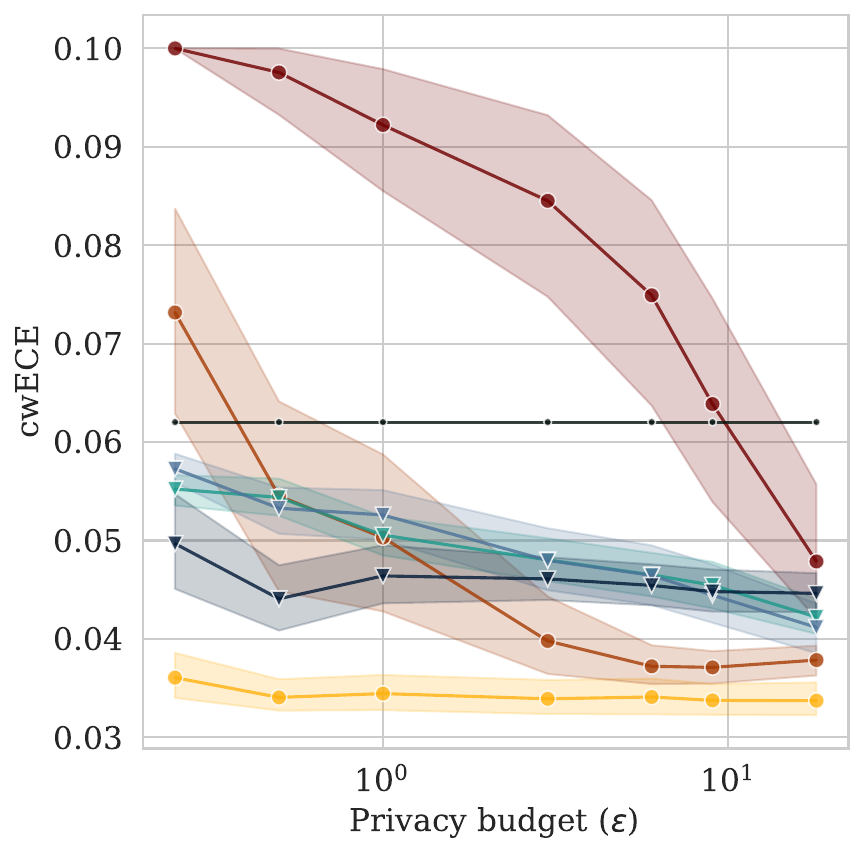}}
  \subfloat[Varying $\eps$: Test Accuracy \label{fig4_acc}]{%
    \includegraphics[width=0.28\linewidth]{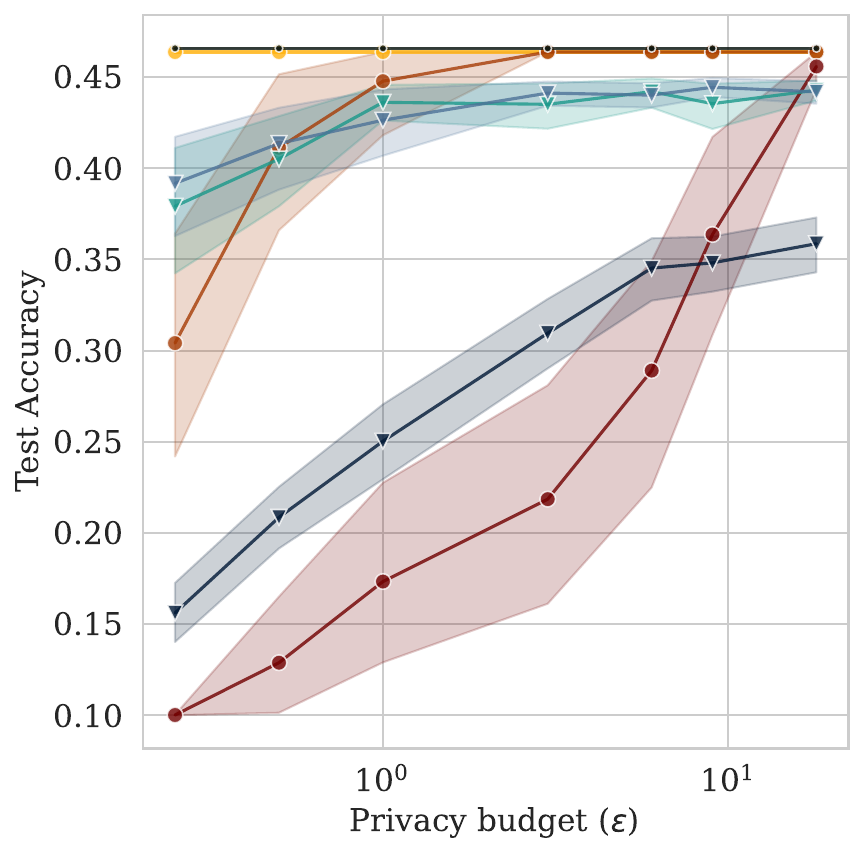}}
  \subfloat[Varying $T$ and $\eps$: cwECE \label{fig4_varyt}]{%
       \includegraphics[width=0.28\linewidth, height=3.8cm]{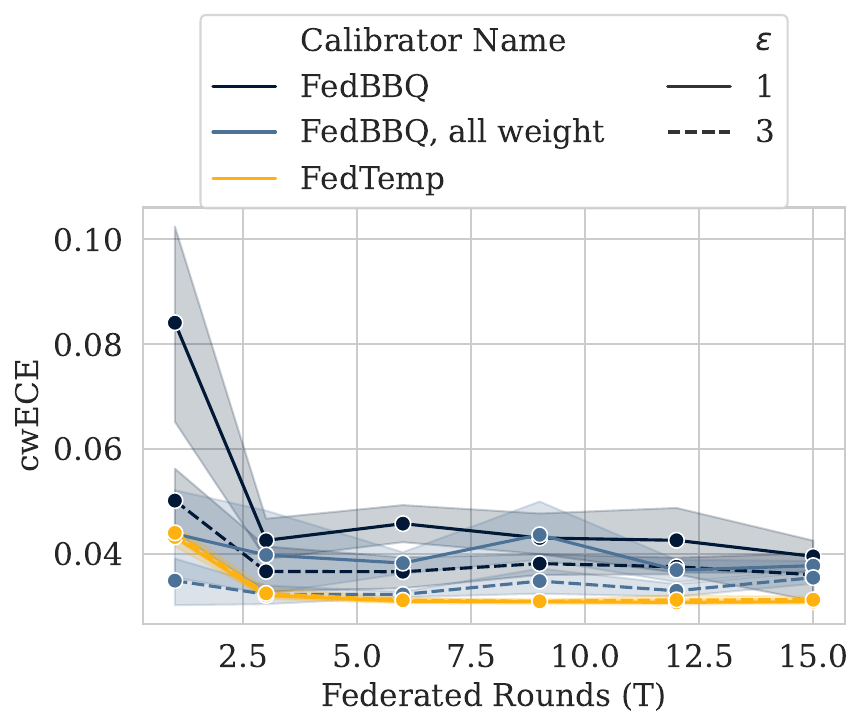}} \\
  \caption{DP-FL Calibration on CIFAR10 (Simple CNN) $\beta=0.1$ varying $\eps$ with $\delta=10^{-5}$\label{fig:4}}
  \vspace{-5mm}
\end{figure*}

\subsection{Scaling Approaches}\label{sec:scale}

We next design federated versions of the scaling approaches introduced by %
\cite{guo2017calibration}. 
This learns a linear transformation of the output logits $g(\vz_i) = \sigma(A\vz_i + \vb)$. Since scaling calibrators are a transformation of logits, they can be viewed as a neural network layer and trained via a FedAvg framework. 
That is, we let clients train their local scaler and have them share the local parameters $A_k, \vb_k$ which the server aggregates over multiple rounds. The global calibrator can be updated using the average of these parameters. This \red{framework} is outlined in Algorithm \ref{alg:fedscale}. 

\paragraph{Instantiating Algorithm \ref{alg:fedscale}.}
We define the following scaling approaches: FedMatrix with $A,\vb$ unconstrained; FedVector with $A$ restricted to diagonal entries; and FedTemp where $A=\textbf{1}/a, b=0$. 

However, similar problems arise when naively applying scaling methods as with naive binning. 
Figure~\ref{fig2_c} shows the change in (global) test accuracy after local scaling to highlight issues on non-IID data. As heterogeneity increases ($\beta$ decreasing), naive vector (FedVector) and matrix (FedMatrix) scaling suffer an increasing penalty to the global accuracy.
This local overfitting results in final calibrated models with worse test accuracy than the base model. 
As vector and matrix scaling learn a bias term $\vb$, this shifts class-predictions and changes accuracy. 
There is no accuracy loss for FedTemp (temperature scaling) as it only rescales logits,  preserving the relative ordering of model predictions.

\paragraph{\red{Addressing Heterogeneity.}} 
To prevent accuracy loss, we can enforce that the scaling for calibration does not change the predictions by adopting \textit{order-preserving training}~\citep{rahimi2020intra}. 
Combined with vector scaling, this ensures that the top-1 accuracy is unchanged and so no accuracy degradation occurs.  
We denote this approach as FedOPVector. 
In Figure \ref{fig2_c}, we see that FedOPVector does not suffer any accuracy loss.
In our later experiments, we compare to the FedCal approach from \cite{peng2024fedcal} which uses an MLP to transform the logits, also in combination with order-preserving training \footnote{They propose an additional scheme in a train-time calibration setting which we omit in our post-hoc setting.}. We present more details on order-preserving training in Appendix~\ref{appendix:op}.

\subsection{Federated calibration with user-level DP}\label{sec:dp}
Our methods provide DP guarantees by appropriate noise addition and clipping during a federated round of calibrator training. 
We guarantee user-level DP by clipping each user's contributions based on a parameter $C$, ensuring bounded sensitivity. 
Appendix \ref{appendix:dp} gives our formal $(\eps,\delta)$-DP guarantees.

\paragraph{Binning under user-level DP.} 
As federated binning approaches require user-level DP histograms, we apply the methods of \citet{liu2023algorithms}.
One issue we observe is that histograms are particularly sensitive to the chosen clipping norm $C$. 
Such sensitivity to $C$ is exacerbated in histogram binning as the positive and negative example histograms are often over different scales, e.g., there are far more negative examples than positive in a one-vs-all setting. We overcome this by using two separate clipping norms $C_+, C_-$ which determine the clipping norm for the individual histograms. 
A different problem occurs when utilizing FedBBQ over multiple binning schemes, as the privacy cost scales with the number of binners. 
Our solution avoids this, as we instantiate a single large histogram of size $2^M$ and form multiple binning schemes by merging bins. 
Under these settings, the privacy cost scales proportional to $2Tc$ where $c$ is the number of classes and $T$ is the total number of federated calibration rounds (see Appendix~\ref{appendix:dp}).

\paragraph{Scaling under user-level DP.} As the scaling methods are trained under FedAvg, we can apply DP-FedAvg with minimal changes \citep{mcmahan2017learning}. 
Here, the local scaling parameters $A_k,\vb_k$, are clipped to norm $C$ by clients and shared with the server under secure-aggregation \citep{bell2020secure}. 
The server can then  average these aggregated parameters and add calibrated Gaussian noise to guarantee user-level DP. Here, the privacy cost scales proportional to $T$, the total number of federated calibration rounds.

\begin{table*}[t]
    \caption{Comparison of cwECE across methods in a non-DP FL setting, $\beta=0.1$ for non-LEAF datasets and $T=12$. Mean cwECE is reported as a \% alongside standard deviation over 5 runs. Results marked $^*$ suffer $> 1$\% drop in test accuracy after calibration.}
    \label{tab:1}
    \centering
    \small
    \setlength{\tabcolsep}{5pt}
\begin{tabular}{llrrrrr}
\toprule
            &  Base & FedBBQ &FedBBQ &FedCal &FedOPVector&FedTemp \\
Dataset &              &                &  All weight                                       &                &                &                \\
\midrule
CIFAR10 (18) &  8.11 &  \textbf{2.375 (0.276)} &       2.499 (0.36) &   4.391 (0.849) &  3.779 (0.741) &  2.428 (0.112) \\
CIFAR100 (18) &  0.94 &  0.317$^*$ (0.017) &  \textbf{0.29 (0.02)} &  0.526 (0.025) &  0.472 (0.018) &   0.341 (0.01) \\
SVHN (18) &  2.05 &\textbf{1.46 (0.145)}&  1.483 (0.247) & 2.468 (0.165) &   2.479 (0.51) &  2.172 (0.182) \\
Tinyimnet (18) &  0.21 &  0.211$^*$ (0.005) &  0.194 (0.002) &        0.187 (0.001) &  \textbf{0.187 (0.0)} &  0.198 (0.001) \\
FEMNIST (2) &  0.18 &  0.161 (0.005) &  \textbf{0.153 (0.004)} &          0.237 (0.048) &  0.205 (0.014) &  0.182 (0.001) \\
MNIST (2) &  0.77 &  \textbf{0.557 (0.037)} &      0.586 (0.046)  &  1.232 (0.238) &  0.865 (0.202) &  0.763 (0.018) \\
Shakespeare (3) &  \textbf{0.14} &  0.212 (0.001) &  0.2 (0.002) &   0.331 (0.06) &    0.38 (0.04) &  0.165 (0.052) \\
\bottomrule
\end{tabular}
\vspace{-4pt}
\end{table*}

\begin{table*}[t]
    \centering
    \caption{DP-FL comparison, $\beta=0.1$ for non-LEAF datasets, $T$=12 and ($\eps, \delta$) = $(1, 10^{-5})$. %
    Methods in bold achieve lowest cwECE for a particular dataset, ignoring methods that suffer accuracy drops.}
    \label{tab:2}
    \small
    \setlength{\tabcolsep}{5pt}
\begin{tabular}{lllllll}
\toprule
            &  Base &         FedBBQ & FedBBQ  &         FedCal &    FedOPVector & FedTemp \\
Dataset &               &                &  All weight                                         &                &                &                \\
\midrule
CIFAR10 (18) &  8.11 &   4.582$^*$ (1.03) &  \textbf{3.086 (0.15)} &  9.999$^*$ (0.002) &  4.118 (0.812) &  4.423 (0.122) \\
CIFAR100 (18) & 0.94 &  0.129$^*$ (0.005) &  0.132$^*$ (0.008) &      1.0$^*$ (0.0) &  0.592 (0.033) &   \textbf{0.346 (0.006} \\
SVHN (18) &  2.05 &    6.89$^*$ (0.64) &   2.047 (0.002) &          9.955$^*$ (0.034) &  2.396 (0.528) &  \textbf{2.007 (0.025)} \\
Tinyimnet (18) &  0.21 &  0.232$^*$ (0.017) &        0.212 (0.0) &   0.5$^*$ (0.0) &  \textbf{0.192 (0.009)} &  0.198 (0.003) \\
FEMNIST (2) &  0.18 &  1.888$^*$ (0.029) &      0.184 (0.002) &          0.232 (0.025) &  0.201 (0.009) &   \textbf{0.18 (0.001)} \\
MNIST (2) &  0.77 &  9.381$^*$ (0.86) &        0.769 (0.0) &  9.998$^*$ (0.003) &  2.607 (2.153) &  \textbf{0.746 (0.008)} \\
Shakespeare (3) &  0.14 &  1.281$^*$ (0.051) &       0.15 (0.003) &  0.387 (0.121) &  0.339 (0.057) &    \textbf{0.138 (0.0)} \\
\bottomrule
\end{tabular}
\vspace{-4pt}
\end{table*}

\section{Experiments} \label{sec:exp}
We perform federated calibration on a base model trained via FedAvg without DP on 7 datasets: MNIST %
with a 2-layer CNN and ResNet18; CIFAR10/100 %
with a 2-layer CNN and ResNet18; SVHN, 
 Tinyimagenet 
and FEMNIST %
with ResNet18; and Shakespeare 
with an LSTM for next-character prediction. 
See Appendix \ref{appendix:datasets} and \ref{appendix:hp} for dataset and model training details. For FEMNIST and Shakespeare we utilize the LEAF benchmark \citep{caldas2018leaf} which federates in a natural way to induce heterogeneity. For other datasets we assume $K=100$ clients and simulate heterogeneity via the label-skew approach of \citet{yurochkin2019bayesian}. This involves sampling client class distributions from a Dirichlet($\mathbf{\beta}$) where smaller $\beta$ creates more skew (see Appendix \ref{appendix:hetero}). %
Figures on other datasets can be found in Appendix \ref{appendix:experiments}. Our FL models achieve test accuracy that matches or improves over those used in FedCal \citep{peng2024fedcal}, see Table \ref{tab:models}. %

For instantiating our binning framework (Alg \ref{alg:fedbin}) we consider FedBBQ and two variations with our weighting-scheme: \textit{all weight}, applies the weighting scheme to all predictions and \textit{changed weight} which only applies weights to examples that have their class prediction changed by the calibrator. For scaling (Alg \ref{alg:fedscale}), we consider FedTemp and FedOPVector. %
We also compare to a post-hoc variant of FedCal \citep{peng2024fedcal}. %
We focus on cwECE in our experiments, see Appendix \ref{appendix:ece} for a detailed discussion on why this is preferable to ECE.

We are interested in answering the following research questions:
\begin{itemize}
    \item \textbf{RQ1 (Non-IID FL)}: Which method is most effective in a non-IID FL setting?
    \item \textbf{RQ2 (Mitigating Heterogeneity)}: Do our proposed mitigations for heterogeneity prevent accuracy loss in both FL and DP-FL settings?
    \item \textbf{RQ3 (DP-FL)}: Which method achieves the best balance of accuracy and calibration error in a non-IID user-level DP-FL setting?
    \item \textbf{RQ4 (Robustness):} How robust are these findings across different datasets?
\end{itemize}

\paragraph{RQ1 (Non-IID FL):}
In Figure \ref{fig3_ece}, we vary the number of federated rounds $(T)$ used to train the calibrator on CIFAR10 with no DP. We observe FedTemp has consistently good cwECE but that as $T$ increases the FedBBQ approach outperforms it.
This is because binning methods aggregate histograms, and, if given enough rounds, can closely match the performance of a central calibrator (displayed in dashed lines). 
Note that while FedBBQ only outperforms scaling when $T$ is around $30$, this is far smaller than the total number of federated rounds required to train the base FL model itself. Hence, these calibration approaches are lightweight and can be performed as an extra federated epoch at the end of training with modest additional overhead to the overall federated training pipeline. In Appendix \ref{appendix:overhead}, we benchmark client computation and communication overheads and find all methods are lightweight but that BBQ achieves the best balance. We find scaling methods with a larger number of parameters (i.e., FedCal) often perform the worst. \\\red{\textbf{Summary:} For the FL setting, FedTemp is the most competitive scaling method when $T$ is small but is outperformed by FedBBQ when trained over multiple rounds ($T>30$). We recommend using FedBBQ with a moderate number of rounds $(T=30)$ in the non-DP FL setting.}

\paragraph{RQ2 (Mitigating Heterogeneity):} 
In Figure \ref{fig3_beta}, we vary %
$\beta$ on CIFAR10. We fix the number of FL rounds to $T=12$. We observe binning approaches work well under all ranges of heterogeneity as the algorithm aggregates histograms which mostly mitigates skew. We observe FedTemp achieves best cwECE out of the scaling approaches but is outperformed by binning methods. In Figure \ref{fig3_acc}, we fix $\beta =0.1$ and vary $T$. We observe all scaling methods obtain the same accuracy as the base model (all scaling methods in the figure are overlaid). We clearly find our weighting scheme for FedBBQ and the order-preserving scaling methods prevent any loss in test accuracy. In contrast, if the number of rounds is too small, unweighted FedBBQ suffers a large drop in accuracy as discussed in Section \ref{sec:bin}. \\\red{\textbf{Summary:} Our weighting schemes prevent accuracy degradation for FedBBQ and the order-preserving training of FedOPVector prevents the accuracy degradation observed in FedVector in Section \ref{sec:fedcal}. FedBBQ with weighting has better ECE than FedOPVector and so this is further evidence to recommend the use of weighted FedBBQ for non-DP FL settings.}

 \paragraph{RQ3 (DP-FL):} In Figure \ref{fig:4} we study the DP-FL setting, varying the privacy budget ($\eps$) and compare binning to scaling. We set $C=0.5$ for scaling methods and $C=[10,50]$ for binning approaches. 
 Observe FedCal, which utilizes an MLP, is the most sensitive to noise since it has the most model parameters. 
 For any reasonable privacy ($\eps < 10$) it achieves larger cwECE than the base model. We find FedTemp is the most resilient to high noise since it shares only a single parameter under DP-FedAvg. Binning methods struggle under DP-noise due to the user-level clipping on histograms. We explore this further in Appendix \ref{appendix:dp_fl}. In Figure \ref{fig4_varyt}, we vary both the number of rounds ($T$) and the privacy budget ($\eps$) for CIFAR100 (CNN) and plot the cwECE. We study $\eps=1,3$ for FedTemp, FedBBQ and our weighted FedBBQ. We find FedTemp is insensitive to DP noise, with a decrease in cwECE that plateaus quickly as $T$ increases. In contrast, the binning approaches often struggle in a one-shot ($T=1$) setting but closely match FedTemp as $T$ increases, even under DP noise.
\\\red{\textbf{Summary:} In the DP-FL setting, we recommend using FedTemp as it achieves the best cwECE with no accuracy degradation unlike FedBBQ methods which may degrade accuracy under high privacy.}

\paragraph{RQ4 (Robustness):} In Tables \ref{tab:1} and \ref{tab:2} we explore the cwECE across each benchmark dataset in FL and DP-FL settings. %
For non-DP (Table~\ref{tab:1}), we observe binning approaches consistently outperform scaling methods, achieving lowest cwECE overall. On two datasets, FedBBQ degrades model accuracy, something our weighting scheme helps prevent.
For DP-FL (Table \ref{tab:2}), we find FedTemp consistently achieves best cwECE with smaller variance. It is clear that under DP noise methods with fewer parameters perform best (i.e., FedTemp) compared to FedCal or FedBBQ which have a larger number of parameters. We note FedTemp with DP sometimes achieves better cwECE than without DP. We find the clipping involved in DP actually helps mitigate local skew and show this empirically in Appendix \ref{appendix:temp_clip}.
We also flag results that suffer a drop of $> 1\%$ test accuracy. This reveals the only method to suffer a severe drop is the naive unweighted FedBBQ approach (with and without DP), highlighting the necessity of our weighting schemes which prevent accuracy degradation.
\\\red{\textbf{Summary:} Extending our results across multiple datasets, we find FedBBQ remains best in a FL setting under strong heterogeneity. Conversely, for DP-FL we find FedTemp achieves the best balance of accuracy and cwECE. We find large-parameter scalers like FedCal unsuitable in all settings.}

\paragraph{Conclusion.} 
Our results show we can calibrate effectively in the federated model, even in the face of high heterogeneity, and skew, 
 without affecting model accuracy.
Overall, for FL, we find binning with weight adjustments achieves best cwECE if trained over multiple rounds. However, in DP-FL, temperature scaling is preferred as it is resilient to both non-IID skew and DP noise, something binning and higher-order scalers struggle with. 
Thus, we can make clear recommendations for how best to calibrate models in each setting.

\clearpage
{\small
\bibliography{paper}
\bibliographystyle{iclr2026_conference}
}

\appendix
\section{Experiment Details}
\label{appendix:replication}
All experiments (federated model training and federated calibration) were performed on a machine with a Dual Intel Xeon E5-2660 v3 @ 2.6 GHz and 64GB RAM. No GPUs were used in the training or calibration of the federated models. All experiments simulate client training sequentially (i.e., there is no paralellism) and individual federated calibration runs finish within 1 hour with full experiments (with $5$ repeats) taking at most 24 hours.
\subsection{Datasets and Models}\label{appendix:datasets}

\begin{table*}[t]
    \caption{Datasets used in our experiments.\label{tab:datasets}}
    \centering
    \small
    \begin{tabular}{lllll}
    \toprule
    Dataset & Total Samples ($|D|$) & Number of clients ($K$) & Heterogeneity Type & Model Arch \\
    \hline \\ 
    MNIST & 70,000 & 100 & Label-skew & 2-layer CNN\\
    CIFAR10 & 70,000 & 100 & Label-skew & ResNet18  \\
    CIFAR100 & 70,000 & 100 & Label-skew & ResNet18 \\
    SVHN & 70,000 & 100 & Label-skew & ResNet18 \\
    Tinyimagenet & 100,000 & 100 & Label-skew & ResNet18 \\
    FEMNIST & 805,263 & 1778 & LEAF & ResNet18 \\
    Shakespeare & 4,226,15 & 607 & LEAF & LSTM \\
    \bottomrule
    \end{tabular}
    \vspace{-4pt}
\end{table*}

We use a variety of datasets in our experiments, as described in Table \ref{tab:datasets}. In all experiments we merge the train and test sets to create a single global dataset. This dataset is then federated to clients via the label-skew approach described in Appendix \ref{appendix:hetero}. Each client’s local dataset is split into a train, test and calibration set. We take 80\% of the dataset as training and 10\% each for testing and calibration. 
All base models are also trained in a federated setting. 
In more detail:
\begin{itemize}
    \item \textbf{MNIST} \citep{deng2012mnist} is an image classification dataset that classifies handwritten numerical digits between 0-9. It has 10 classes. We train both a simple CNN and a ResNet18 model in our experiments. We partition the full dataset of 70,000 samples (train + test) across 100 clients. In the IID setting this results in each client having 560 train samples, 70 test and 70 calibration. 
    \item \textbf{CIFAR10/CIFAR100} \citep{krizhevsky2009learning} is an image classification dataset with 10/100 classes. We train both a simple 2-layer CNN and a ResNet18 model in our experiments.  We partition the full dataset of 70,000 samples (train + test) across 100 clients. In the IID setting this results in each client having 560 train samples, 70 test and 70 calibration. 
    \item \textbf{SVHN} \citep{svhn} is an image classification dataset with $10$ classes formed from house number digits. We partition the full dataset of $70,000$ samples across $100$ clients using the label-skew approach. In the IID setting, each client has 560 train samples, 70 test and 70 calibration. 
    \item \textbf{Tinyimagenet (a.k.a. Tinyimnet)} \citep{le2015tiny} is a subset of imagenet with 200 classes, 500 of each.  We partition the full dataset using the label-skew approach but restrict $\beta \ge 0.3$ as setting $\beta$ very small can cause sampling issues in the label-skew partitioning procedure due to the large number of classes.
    \item \textbf{FEMNIST} \citep{caldas2018leaf} is an image classification dataset with 62 classes. We use LEAF which provides a partition of the global dataset to clients where each client’s local dataset has characters with similar handwriting to induce local heterogeneity. We use the “sample” partitioning which assigns user’s local samples into train and test groups. We then take 50\% of the local test set to form a client’s calibration set. We filter away any users with fewer than 100 samples, resulting in 1,778 clients with an average of 182 samples for training, 23 for test and 23 for calibration.
    \item \textbf{Shakespeare} \citep{caldas2018leaf} is a next-character prediction task trained on the works of Shakespeare. We use LEAF which provides a partition of the global dataset to induce heterogeneity. In this case, each speaking role, in each  of Shakespeare's works, is a client. We use the “sample” partitioning which assigns user’s local samples into train and test groups. We then take 50\% of the local test set to form a client’s calibration set. We filter away any users with fewer than 100 samples (i.e., roles with $< 100$ characters) resulting in 607 clients with an average of 5,590 samples for training, 293 for test and 293 for calibration.
\end{itemize}

We use the following model architectures:
\begin{itemize}
    \item \textbf{Simple CNN:} We train a small convolutional neural network with 2 convolutional layers and 2 fully-connected layers. Our architecture follows that of the baseline used by \citet{caldas2018leaf}\footnote{\url{https://github.com/TalwalkarLab/leaf/blob/master/models/femnist/cnn.py}}.
    \item \textbf{ResNet18:} We use the ResNet18 model on CIFAR100, SVHN, FEMNIST and Tinyimagenet in our experiments. This is consistent with its use by \citet{peng2024fedcal} for federated calibration experiments.
    \item \textbf{Stacked LSTM}: For Shakespeare, we follow the model architecture used by \citet{caldas2018leaf} which is a stacked 3-layer LSTM. See the open-source implementation of LEAF for more details\footnote{\url{https://github.com/TalwalkarLab/leaf/blob/master/models/shakespeare/stacked_lstm.py}}.
\end{itemize}
\subsection{Modeling Heterogeneity}\label{appendix:hetero}

In our experiments, we use two methods to partition our benchmark datasets into federated splits that exhibit  heterogeneity.

\textbf{Label-skew}: In experiments where we vary heterogeneity we follow the approach outlined by \citet{yurochkin2019bayesian}. The label skew is determined by  $\pmb{\beta} = \beta \cdot \mathbf{1}^K$, where $K$ is the total number of clients. For each class value $j \in [c]$, we sample the client distribution via $\pmb{p}_j \sim \text{Dirichlet}(\pmb{\beta})$ where a smaller $\beta$ value creates more skew. For a particular client $k \in [K]$ we assign rows with class value $j$ proportional to $p_{j,k}$. This produces a partitioning of the dataset into clients that are skewed via the class values of the dataset. The parameter $\beta$ controls the strength of label-skew, where a larger $\beta$ decreases the skew and reduces heterogeneity. 
The setting of $\beta = 1$ corresponds to performing IID sampling for the client partitioning.

\textbf{LEAF:} For two of our datasets, FEMNIST and Shakespeare, we use a pre-partitioned split from the LEAF benchmark \citep{caldas2018leaf}. 
This federates data in a natural way to create heterogeneity in local datasets. As an example, FEMNIST is partitioned so that each user has digits written by the same hand.

\begin{table*}[t]
    \caption{Base model test accuracies for Simple CNN trained under FedAvg with $\beta \in [0.1, \dots, 0.8]$ \label{tab:models}}
    \centering
    \small
    \begin{tabular}{llllllllll}
    \toprule
    Dataset / $\beta =$ & 0.1 & 0.2 & 0.3 & 0.4 & 0.5 & 0.6 & 0.7 & 0.8 & 1 (IID) \\
    \hline \\ 
    MNIST & 93.6\%	& 93.7\%	& 94.6\% & 95.3\% & 95\% &95.4\%	& 96.0\% &	96.5\% & 96.9\% \\
    CIFAR10 & 46.3\%	& 53.6\% & 51.8\% & 53.8\% & 54.2\% & 55.0\% &	56.3\% & 55.1\% & 63.0\% \\
    CIFAR100 & 23.1\% & 24.1\% & 24.4\% & 24\% & 24.5\%	& 24.9\% & 26.8\% & 24.7\% & 25.5\% \\
    \bottomrule
    \end{tabular}
    \vspace{-4pt}
\end{table*}

\subsection{Hyperparameters}\label{appendix:hp}
\subsubsection{Model Training}

For all of our calibration experiments we calibrate a base model that has been trained on clients' local training datasets via FedAvg \citep{mcmahan2016federated}. In datasets with label-skew, we train a model for each value of $\beta$ as this corresponds to a new partitioning of data to clients. In Table \ref{tab:models}, we report the test accuracies of a federated simple CNN model on MNIST, CIFAR10 and CIFAR100 whilst varying heterogeneity ($\beta$). In general, models trained on federated splits with lower values of $\beta$ (corresponding to more local skew) have worse accuracy. We have comparable test accuracy to the models used by ~\citet{peng2024fedcal} and so are suitable for federated calibration. For example, at $\beta=0.1$ Peng et al. achieve 81\% on MNIST (vs. ours at 94\%), 48\% on CIFAR10 (vs. ours at 46\%) and 21\% on CIFAR100 (vs. ours at 23\%). We list the specific training hyperparameters used for each dataset and model architecture. Specifically, these are the local client learning rate $\eta_C$, the server learning rate $\eta_S$, the local batch size $B$ and number of global epochs $E$. In more detail:

\textbf{CIFAR10/100}: We use 100 clients and sample $10\%$ per-round. We use a local client learning rate of $\eta_C = 0.01$ and a server rate of $\eta_S = 1$, and local batch size $B=64$. For the simple CNN we train for $E=300$ global epochs. For ResNet18 we train with $E=100$ keeping the parameters the same as the above. We achieve an overall test accuracy of $63\%$ in the IID case. For CIFAR100, the parameters are the same as CIFAR10 except the number of global epochs. For the simple CNN we set $E=500$ and for ResNet18, $E=100$. We achieve an overall test accuracy of $25\%$ in the IID case.

\textbf{MNIST}: We train for $E=10$ epochs with a local learning rate of $\eta_C = 0.001$ and server rate $\eta_S = 1$ and $B=64$ for the simple CNN model. We achieve an overall test accuracy of $97\%$ in the IID case.

\textbf{SVHN}: We use the same parameters as MNIST and train for $E=10$ epochs with a ResNet18 model

\textbf{Tinyimagenet:} We use the same parameters as MNIST except train for $E=30$ epochs.

\textbf{Shakespeare}: We take 60 clients per round. We train with $\eta_C = 0.01, \eta_S=0.524$ and $E=10$. We achieve an overall test accuracy of $36\%$ on our LEAF partition.

\textbf{FEMNIST}: We take 60 clients per round. We take $E=10, \eta_C=0.01, \eta_S=1$ and $E=30$. We achieve an overall test accuracy of $83.8\%$ on our LEAF partition.
 
\subsubsection{Federated Calibration}

\textbf{Global rounds ($T$)}: This determines the number of federated rounds that are used to train the calibrator. Note that this is not equivalent to a global epoch during model training, instead $T=1$ is a single step i.e., a single federated round of client participation. In the main paper we explore how $T$ effects calibration performance. We find that most methods perform best when the calibrators have been trained for a few rounds (i.e., $15 < T < 30$) but that some methods like FedTemp also perform well in one-shot settings $(T=1)$.

\textbf{Clipping norm ($C$)}: When using DP we apply a clipping norm in order to guarantee bounded sensitivity. For scaling methods, this involves clipping the model update that is sent to the server for each scaling parameter. We clip the scaling parameters to have norm $C$. We found in our experiment that $C < 1$ gave best results and fix $C=0.5$ in our DP-FL experiments. We explore the use of temperature clipping in Appendix \ref{appendix:temp_clip} in a non-DP setting and find it can have positive improvements in mitigating label-skew. For binning methods we have two clipping norms, $[C_+, C_-]$, one for the positive histogram and one for the negative. We explore the choice of clipping norm for FedBBQ in Appendix \ref{appendix:bin_clip} and find that class-positive clipping norm $C_+$ should be chosen to be relatively small for best results.

\textbf{FedBBQ Binning Parameter ($M$)}: This controls the size of the histogram created in FedBBQ resulting in total size $2^M$. In our experiments we fix this to $M=7$. In Appendix \ref{appendix:bins} we explore values of $M$ and find, although not optimal, it gives consistently good results across all settings.

\textbf{Privacy budget $(\epsilon)$}: This is the differential privacy budget. We calibrate the noise needed via zCDP accounting \citep{bun2016concentrated} see Appendix \ref{appendix:dp} for more details.

\begin{figure*}[t!]
\centering
  \subfloat[Varying $M$ ($\eps=\infty$) \label{appendix:m_m}]{%
    \includegraphics[width=0.32\linewidth]{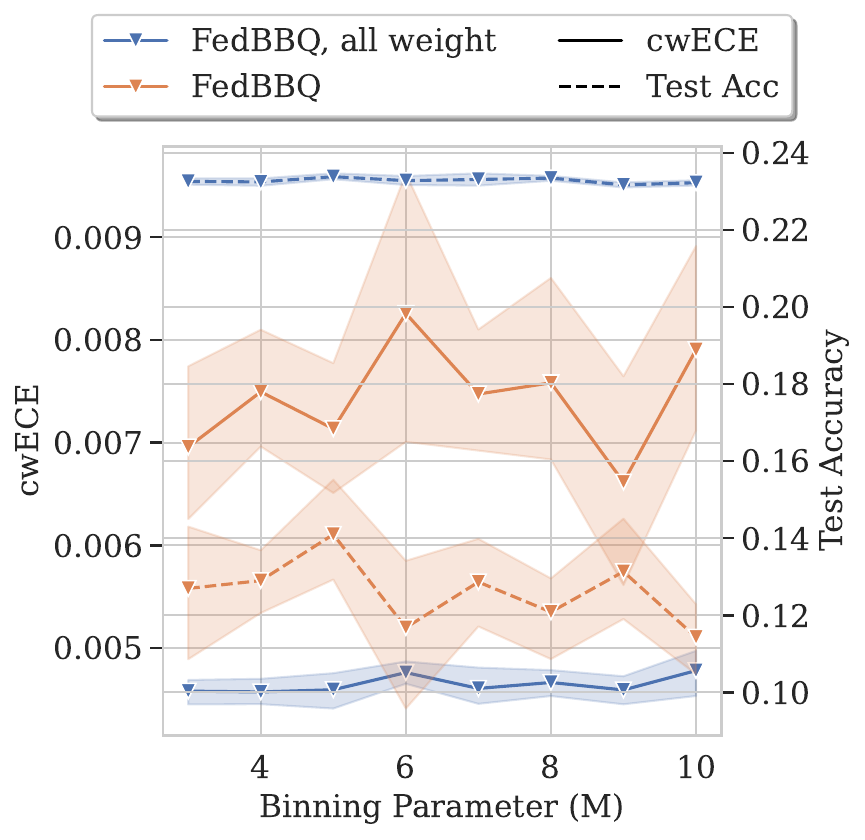}}
  \subfloat[Varying $T$ and and $M$ ($\eps=\infty$) \label{appendix:m_T}]{%
        \includegraphics[width=0.33\linewidth]{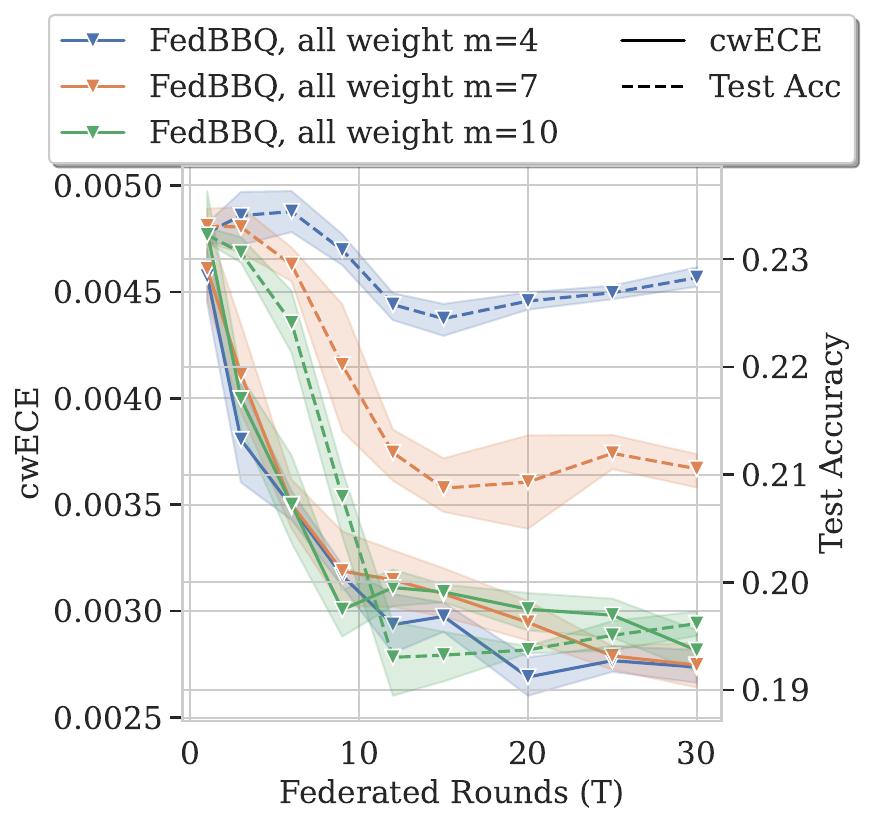}}
  \subfloat[Varying $T$ and $M$ ($\eps=3$) \label{appendix:m_dp}]{%
        \includegraphics[width=0.32\linewidth]{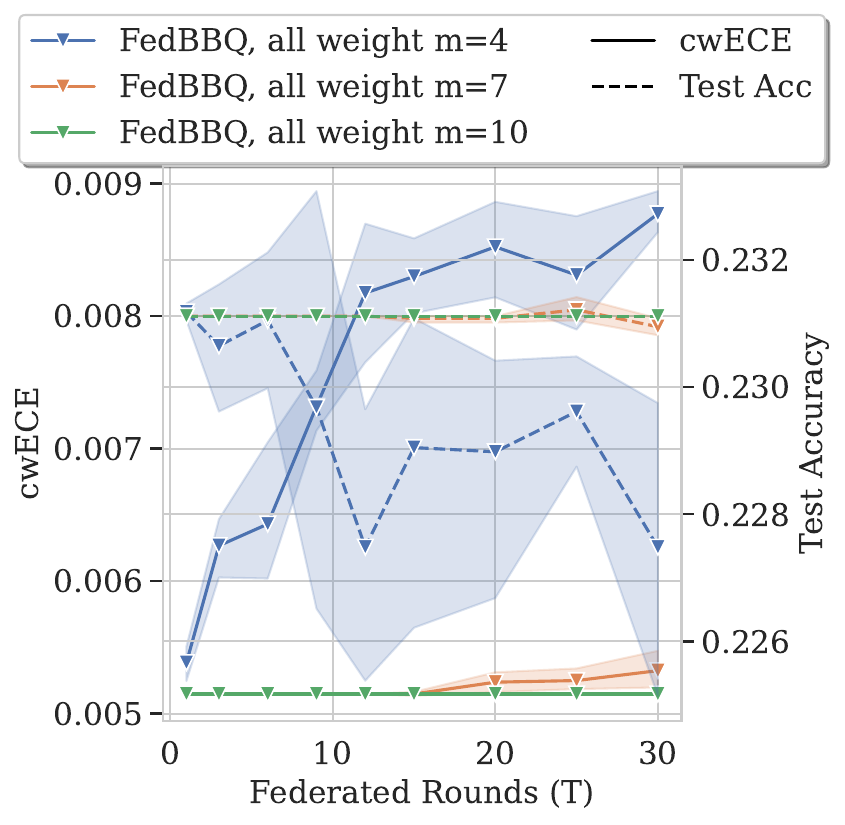}}
  \caption{Federated Calibration via FedBBQ on CIFAR100 whilst varying the bin parameter $M$. This controls the total bins for the BBQ histogram as $B = 2^M$ total bins. \label{appendix:m}}
\end{figure*}

\subsubsection{Varying FedBBQ Binning Parameter ($M$)}
\label{appendix:bins}
In Section \ref{sec:bin} we instantiate BBQ via Algorithm \ref{alg:fedbin}. We instantiate the total number of bins in BBQ via a parameter $M$ which, for each class, creates two histogram over class positive and negative samples with a total number of bins equal to $2^M$. Further histograms are created by merging neighboring bins to create multiple binning calibrators with bin sizes in the range of $\{2, 4, \cdots , 2^{M-1}, 2^M\}$ which are then averaged via BBQ. 
In our main experiments we use a fixed choice of  $M=7$. 
In Figure \ref{appendix:m}, we study the effect of $M$ on both the cwECE and test accuracy after calibration with FedBBQ, and demonstrate that it does not have a significant impact on the results. 

In Figure \ref{appendix:m_m} we vary the binning parameter $M$ with $T=1$ rounds of BBQ in a non-DP setting with heterogeneity parameter $\beta = 0.1$ on CIFAR 100. We plot both the cwECE and the test accuracy and compare our FedBBQ all weight method against naive FedBBQ. We observe that our weighting approach has much less variance across all $M$ values with consistent test accuracy and cwECE which outperforms naive BBQ. Our choice of $M=7$ in our experiments is justified here when $T=1$.

In Figure \ref{appendix:m_T} we vary the number of federated rounds $T$ alongside the binning parameter $M \in \{4,7,10\}$. Here we observe that when $T > 1$ the choice of $M$ is more important to the overall performance of FedBBQ. We observe that setting $M$ too large $(M=10)$ results in poor test accuracy for large $T$. In Figure \ref{appendix:m_dp} we plot the same experiment but under DP with a privacy budget of $\eps = 3$. Here we observe that while $M=4$ achieves the best balance of test accuracy and cwECE in the non-DP setting (Figure \ref{appendix:m_T}) it does not perform well in the DP setting (Figure \ref{appendix:m_dp}). This further justifies our choice of $M=7$ as, while not optimal across all settings, has consistently good utility (i.e., good balance of cwECE and test accuracy). We note that finding ways to adaptively select this parameter on private data in a federated way remains future work.

\subsubsection{Clipping Norms: FedBBQ}
\label{appendix:bin_clip}
In Figure \ref{fig4_bin} we investigate the performance of binning over a variety of clipping norm choices to understand why it performs poorly compared to scaling in the user-level DP setting. We plot the average test accuracy and cwECE after calibration for clipping norms of the form $[C_+, C_-]$ where $C_+$ is the clipping norm for class positive histograms and $C_-$ is the clipping norm for negative histograms. We color points by their respective positive clipping norm $C_+$. We clearly find the clipping norm has a significant impact on the resulting cwECE and test accuracy of the calibrator, and that the biggest impact is from the choice of norm for positive class histograms. This should be chosen relatively small to maintain both the low cwECE and high test accuracy.

\subsubsection{Clipping Norms: FedTemp with Temperature Clipping}
\label{appendix:temp_clip}

We noted in Table \ref{tab:1} and Table \ref{tab:2} that FedTemp with DP often outperformed the non-DP variant, particularly on CIFAR10. In Figure \ref{fig:temp_clip} we plot FedTemp on CIFAR10 (Simple CNN, $\beta = 0.1$) varying the federated calibration rounds $T$ and the clipping norm $(C)$ where $C=-1$ is no clipping. When $5 < T < 30$ we find clipping the clients temperature parameters before performing FedAvg helps improve the cwECE. This is because it prevents a single client from changing the global temperature parameter too much by their local skew. We note that clipping is not effective in a one-shot setting ($T=1$) or when $T$ is large. This is because when $T$ is large, the effects of local-skew are averaged out over enough steps. When $T=1$, the initial temperature parameter is likely far from the global optimal and so clipping will actually worsen results. %

\subsection{Further Experiments}\label{appendix:experiments}

\begin{figure*}[t!]
\centering
    \begin{center}
        \includegraphics[width=0.9\linewidth]{figures/legend.png} \vspace{-4mm}
    \end{center}
  \subfloat[Varying $T$ -- cwECE \label{appendix:cifar100_ece}]{%
    \includegraphics[width=0.3\linewidth]{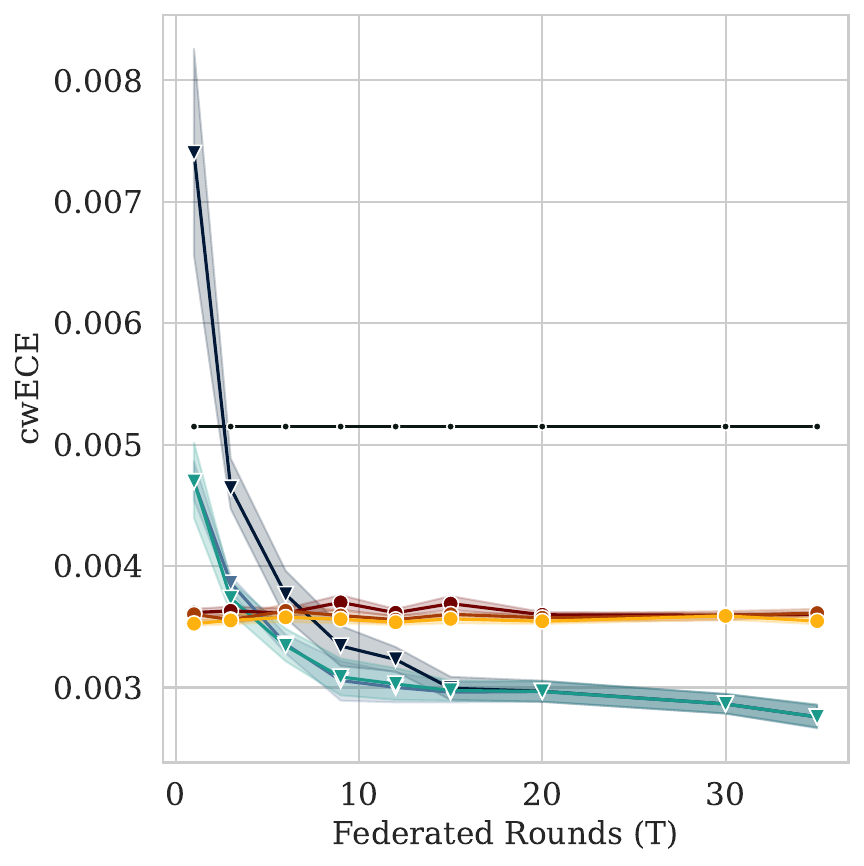}}
  \subfloat[Varying $T$ -- Test Accuracy \label{appendix:cifar100_acc}]{%
        \includegraphics[width=0.3\linewidth]{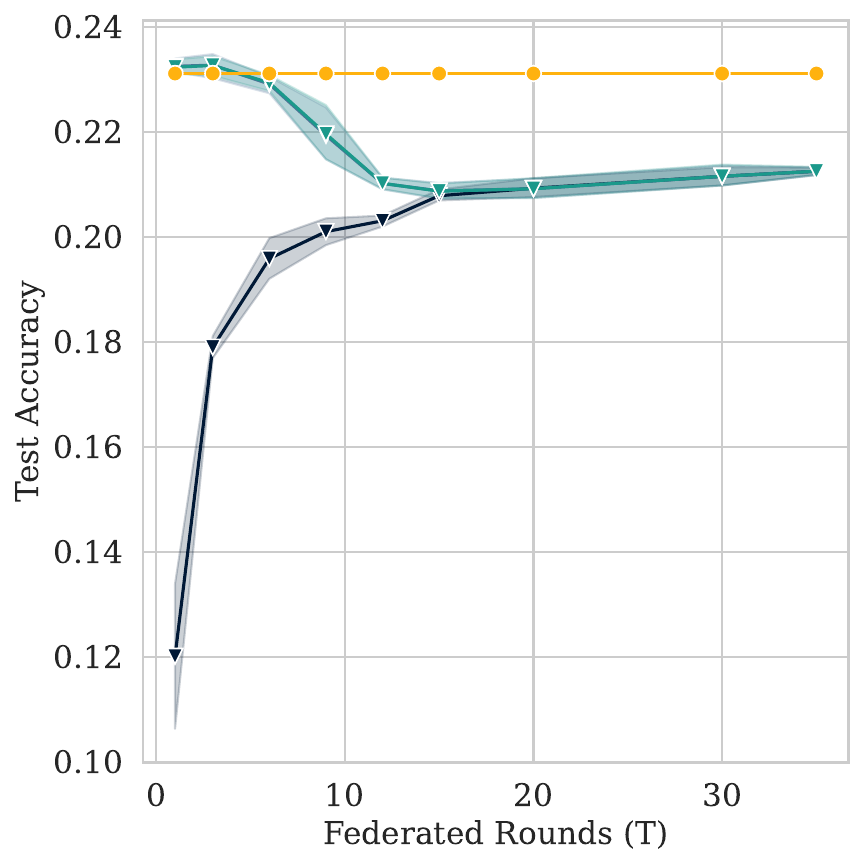}}
  \subfloat[Varying heterogeneity ($\beta$) \label{appendix:cifar100_hetero}]{%
       \includegraphics[width=0.3\linewidth]{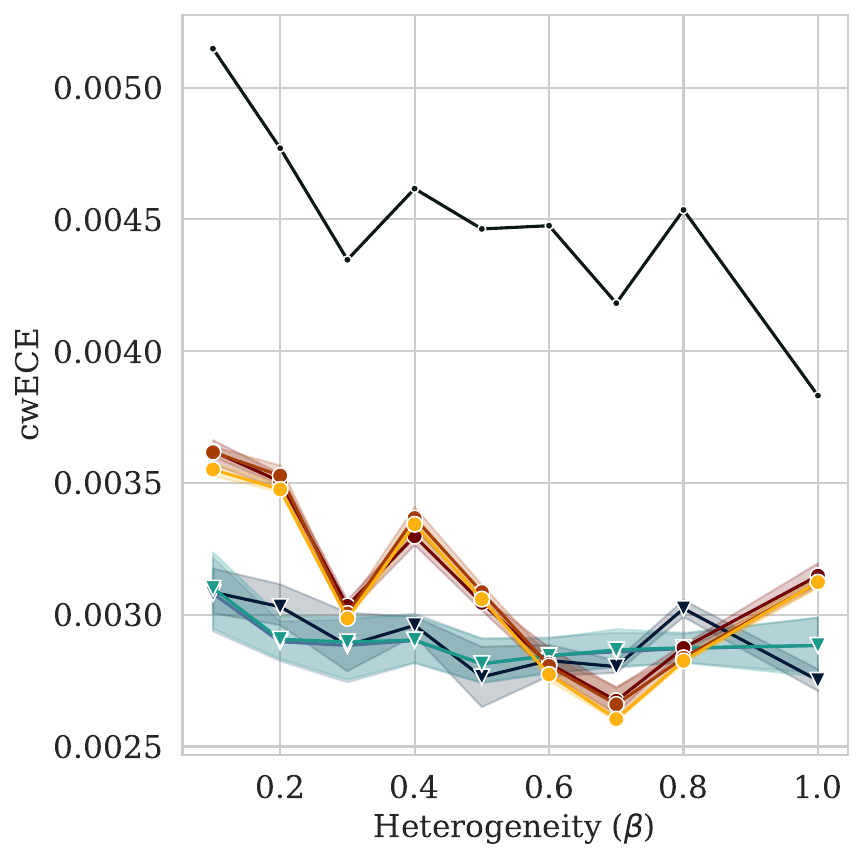}} \\
  \caption{FL Calibration on CIFAR100 (Simple CNN), $\beta=0.1$ unless otherwise stated. \label{appendix:cifar100}}
\end{figure*}
\begin{figure*}[t!]
\centering
    \begin{center}
        \includegraphics[width=0.9\linewidth]{figures/legend.png} \vspace{-4mm}
    \end{center}
  \subfloat[Varying $T$ -- cwECE \label{appendix:mnist_ece}]{%
    \includegraphics[width=0.3\linewidth]{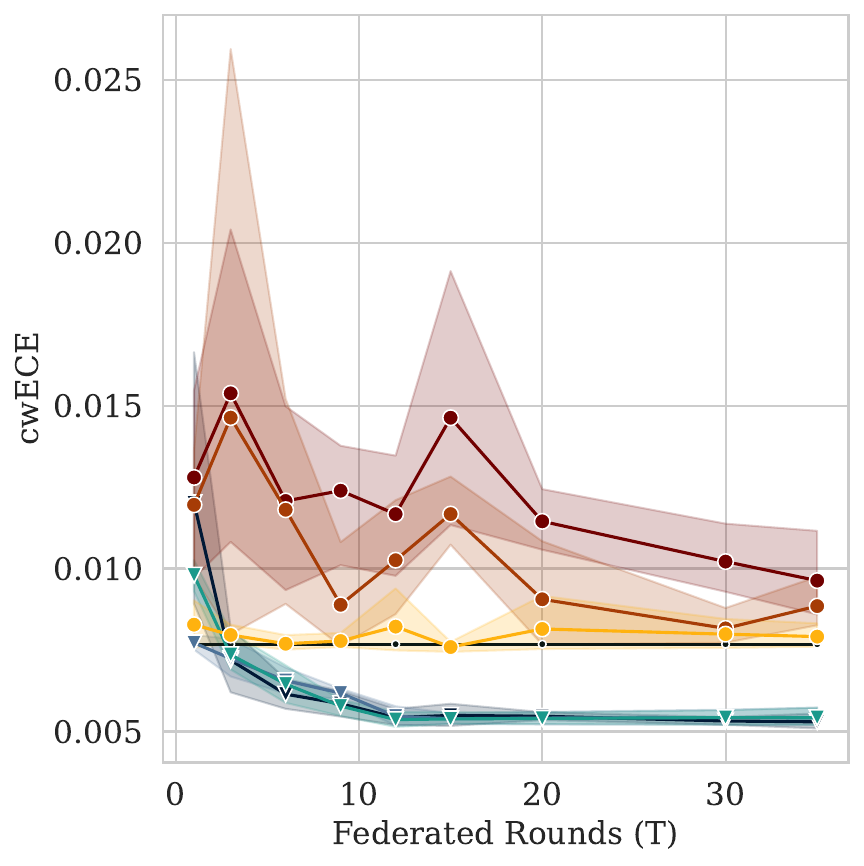}}
  \subfloat[Varying $T$ -- Test Accuracy \label{appendix:mnist_acc}]{%
    \includegraphics[width=0.3\linewidth]{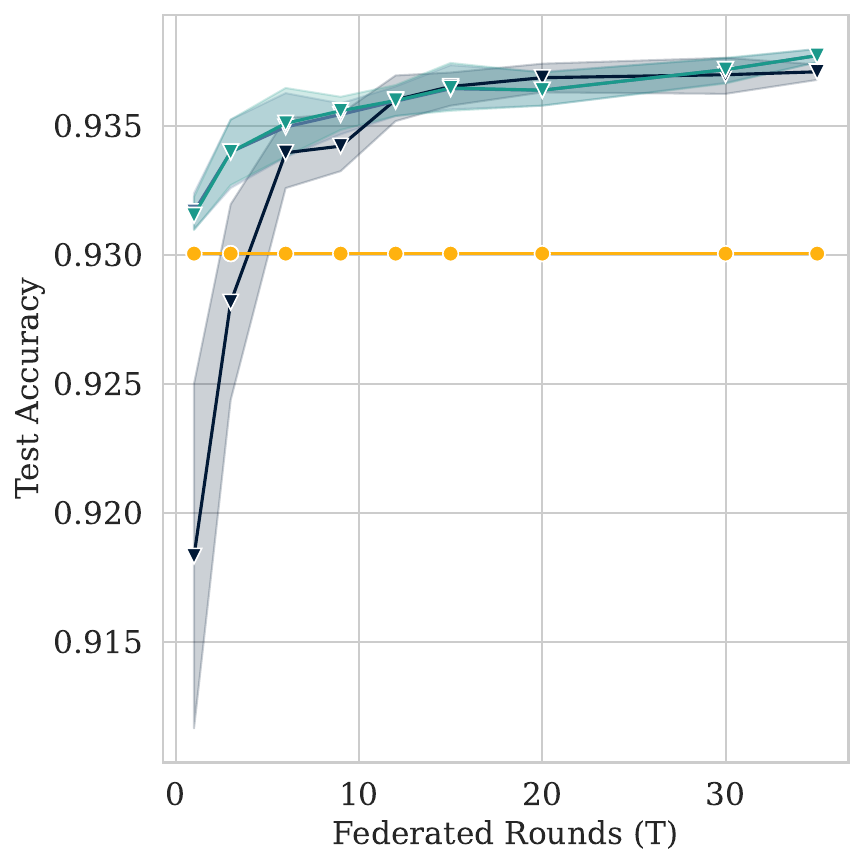}}
  \subfloat[Varying heterogeneity ($\beta$) \label{appendix:mnist_hetero}]{%
       \includegraphics[width=0.3\linewidth]{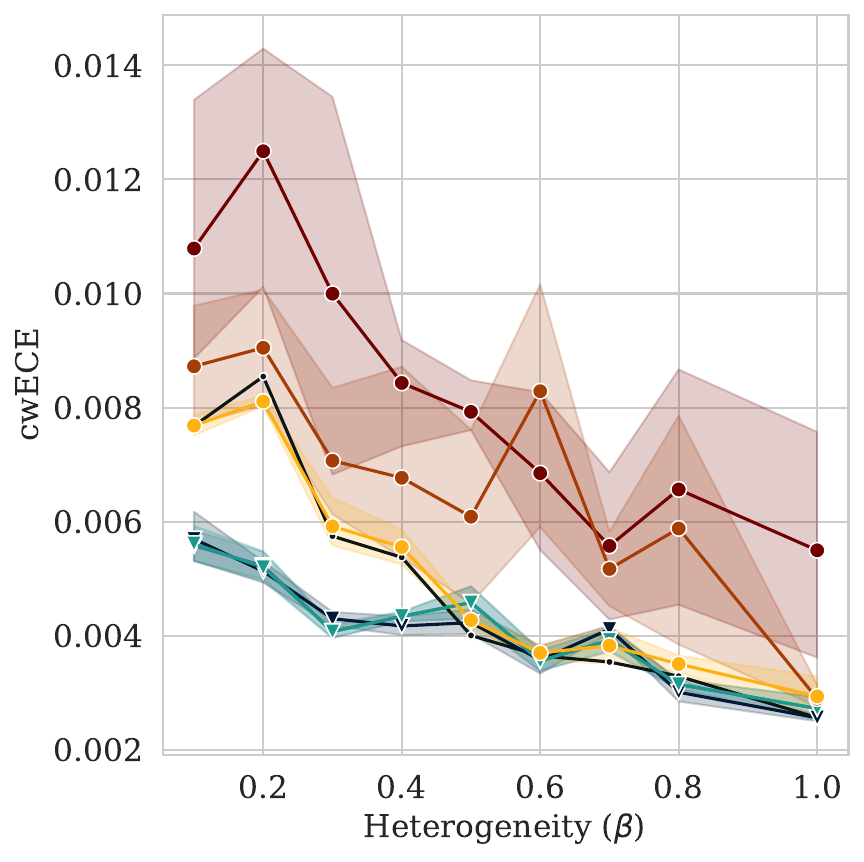}} \\
  \caption{FL Calibration on MNIST (Simple CNN),  $\beta=0.1$ unless otherwise stated. \label{appendix:mnist}}
\end{figure*}

\begin{figure*}[t!]
\centering
    \begin{center}
        \includegraphics[width=0.9\linewidth]{figures/legend.png} \vspace{-4mm}
    \end{center}
  \subfloat[FEMNIST -- cwECE \label{appendix:femnist_ece}]{%
    \includegraphics[width=0.3\linewidth]{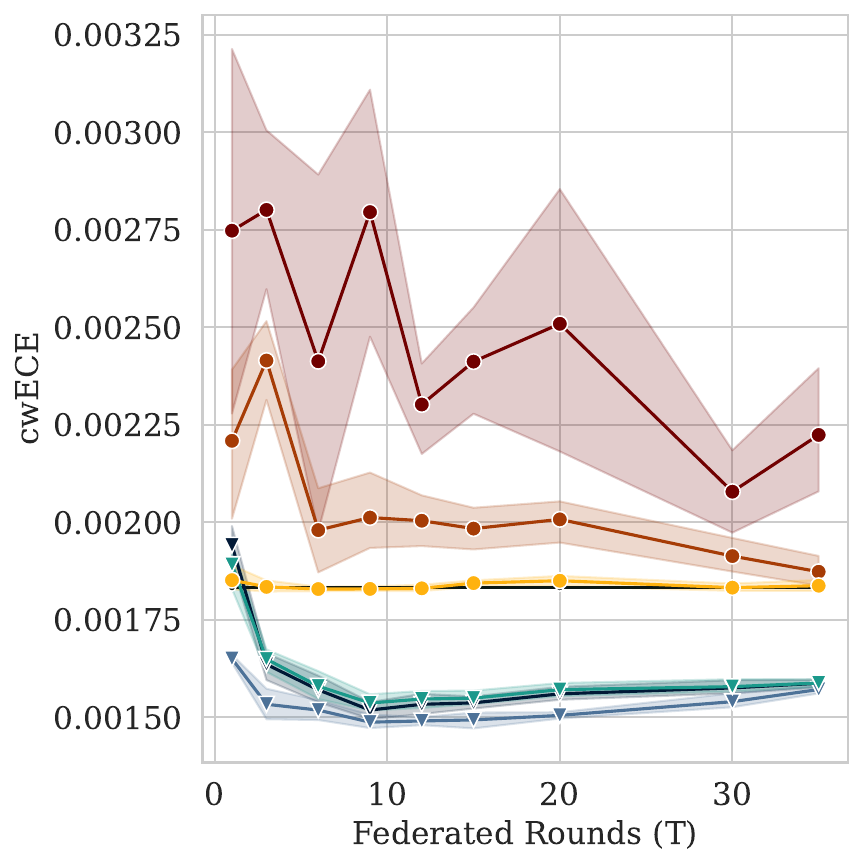}}
  \subfloat[FEMNIST -- Test Accuracy \label{appendix:femnist_acc}]{%
        \includegraphics[width=0.3\linewidth]{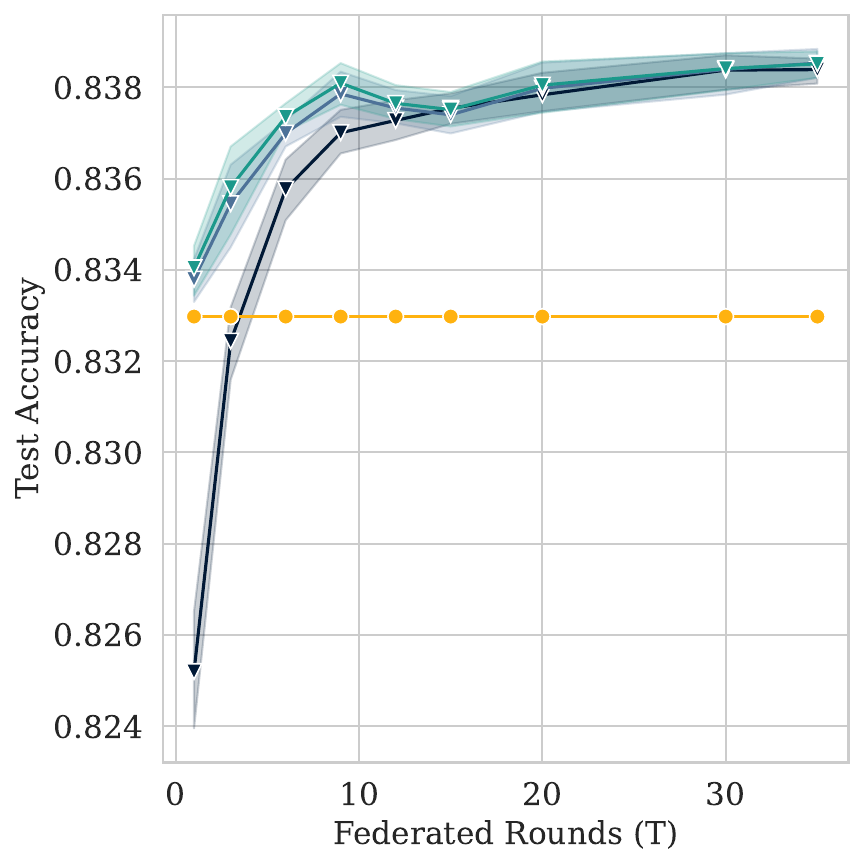}} \\
  \subfloat[Shakespeare -- cwECE \label{appendix:shakes_ece}]{%
        \includegraphics[width=0.3\linewidth]{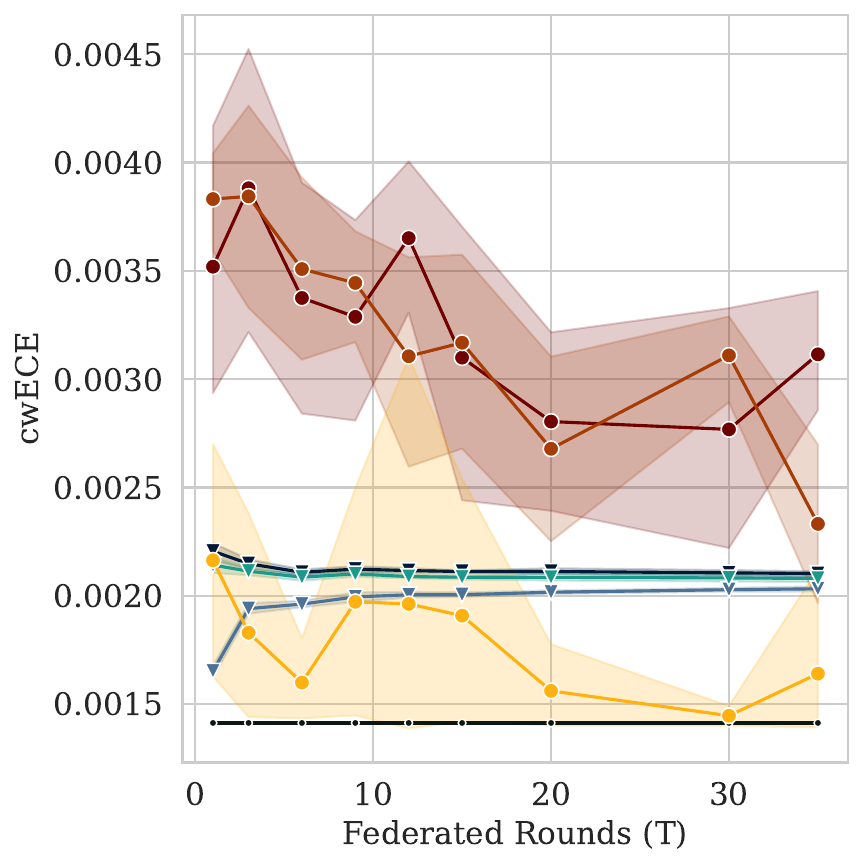}}
  \subfloat[Shakespeare -- Test Accuracy \label{appendix:shakes_acc}]{%
        \includegraphics[width=0.3\linewidth]{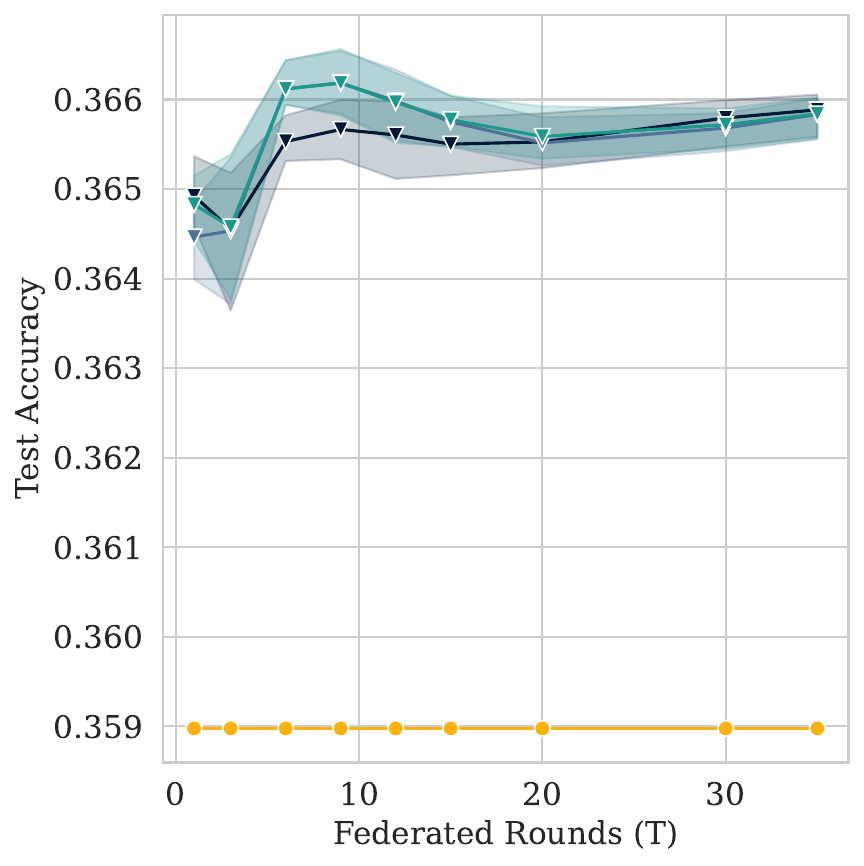}}
  \caption{FL Calibration on LEAF -- Varying $T$ \label{appendix:leaf}}
\end{figure*}

\begin{figure*}[t!]
\centering
    \begin{center}
        \includegraphics[width=0.9\linewidth]{figures/legend.png} \vspace{-4mm}
    \end{center}
  \subfloat[cwECE \label{appendix:cifardp_ece}]{%
    \includegraphics[width=0.3\linewidth]{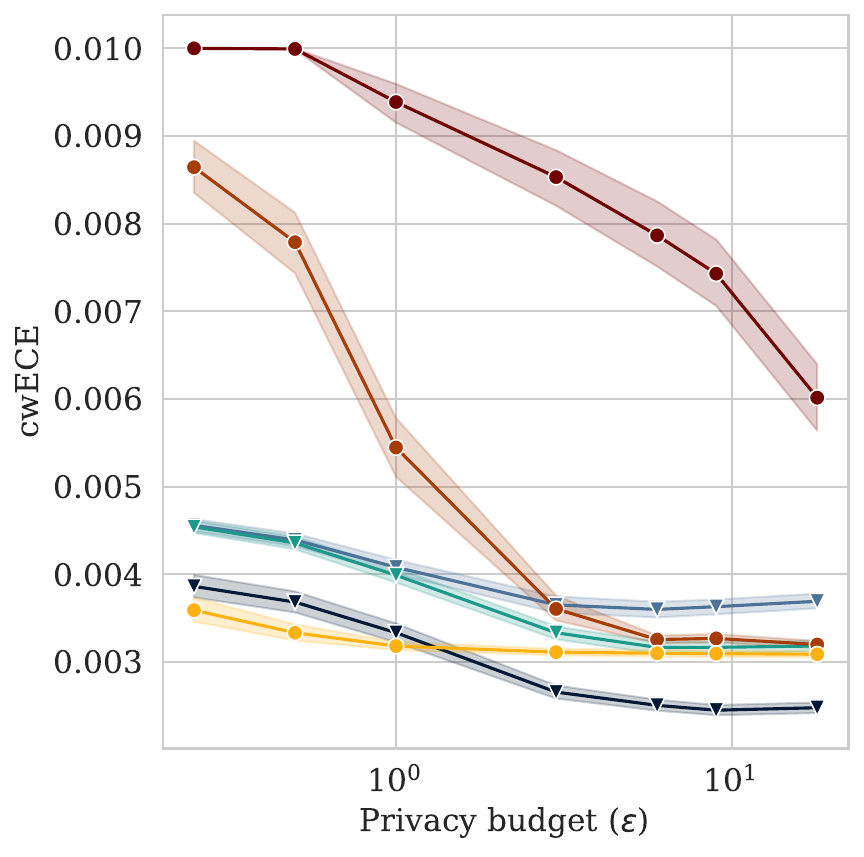}}
  \subfloat[Test Accuracy \label{appendix:cifardp_acc}]{%
        \includegraphics[width=0.3\linewidth]{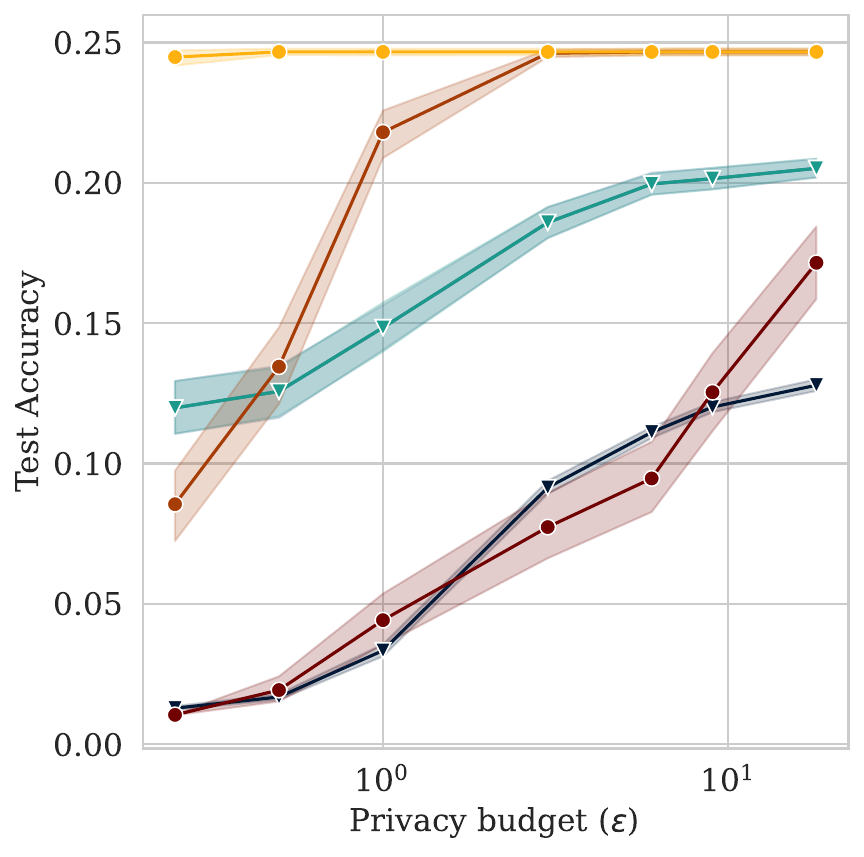}}
  \subfloat[Binning Clipping Norms\label{fig4_bin}]{%
       \includegraphics[width=0.3\linewidth]{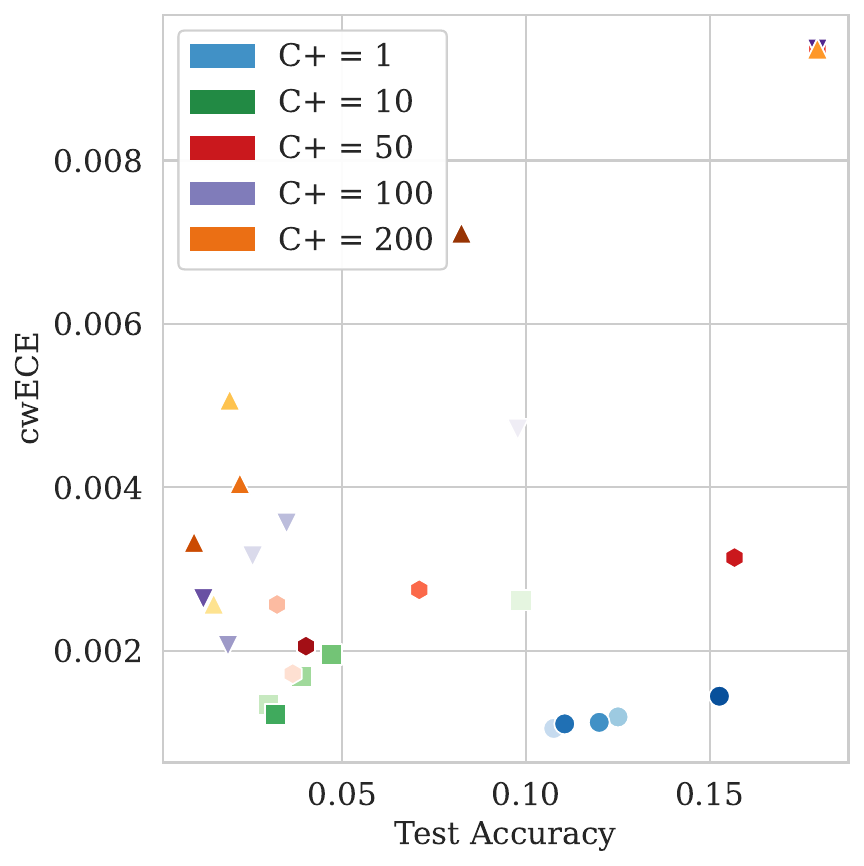}} 
  \caption{DP-FL Calibration - Varying $\eps$ on CIFAR100 (Simple CNN, $\beta = 0.1$) \label{appendix:dpcifar}}
\end{figure*}

\subsubsection{Federated Calibration}
In Figure \ref{appendix:cifar100} and \ref{appendix:mnist} we present FL calibration plots for CIFAR100 and MNIST respectively. In this setting we vary the federated rounds $(T)$ and heterogeneity ($\beta)$ and study both the classwise-ECE and test accuracy after federated calibration. We find results consistent with our conclusions in the main paper on CIFAR10. That is, scaling methods perform well in calibration over a few rounds, but the binning methods achieve best cwECE after training for multiple rounds ($T \geq 30$) as evident in Figure \ref{appendix:cifar100_ece} and Figure \ref{appendix:mnist_ece}. Similarly, when we vary heterogeneity ($\beta$) we observe that when $\beta$ is small (high heterogeneity) it results in higher cwECE than in the IID setting. This trend is more apparent on MNIST (Figure \ref{appendix:mnist_hetero}) than on CIFAR100 where achievable cwECE is fairly constant as heterogeneity is varied (Figure \ref{appendix:cifar100_hetero}). This is likely because the overall model accuracy on CIFAR100 does not change significantly as heterogeneity $(\beta)$ is varied (as seen in Table \ref{tab:models}).

In Figure \ref{appendix:leaf} we present further plots on LEAF datasets. Here we only present experiments varying the number of calibration rounds ($T$) as the LEAF datasets have a fixed partitioning and so cannot vary heterogeneity ($\beta$). For FEMNIST, we observe consistent results as in previous datasets e.g., that binning outperforms scaling over multiple rounds (Figure \ref{appendix:femnist_ece}). For Shakespeare, as seen in the main paper, federated calibration is difficult and no methods improve over the base model.

\begin{figure*}[t!]
\centering
    \begin{center}
        \includegraphics[width=0.3\linewidth]{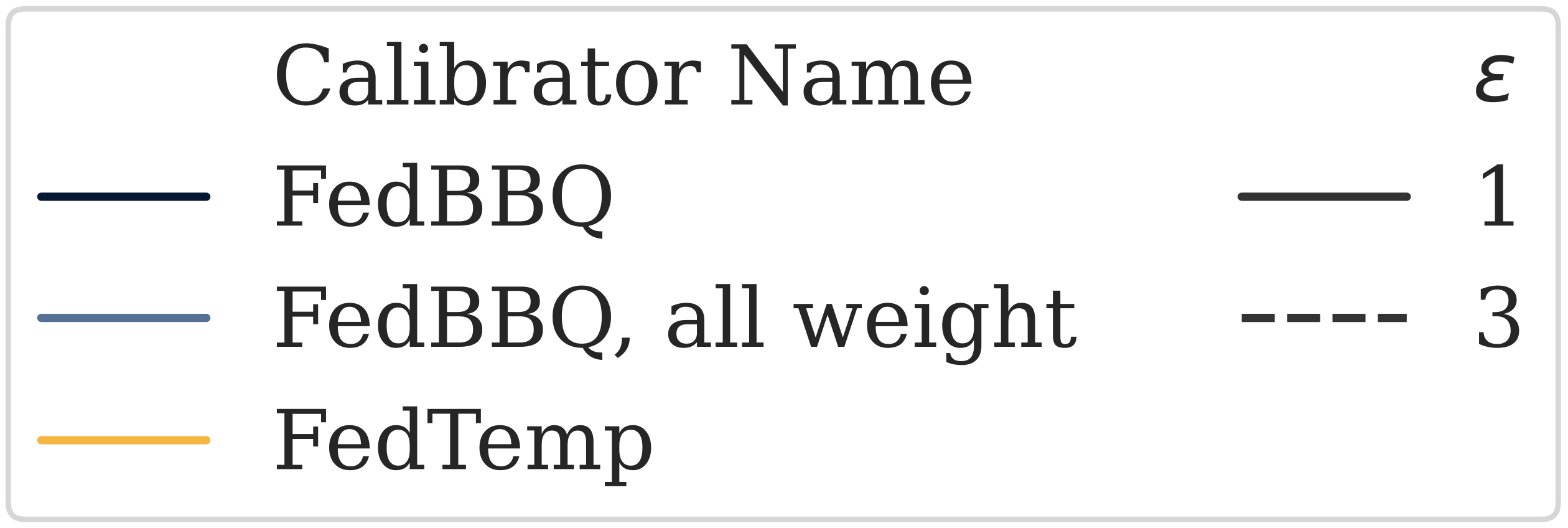} \vspace{-4mm}
    \end{center}
  \subfloat[CIFAR10 -- Test Accuracy\label{appendix:varydp_cifar10_acc}]{%
    \includegraphics[width=0.32\linewidth]{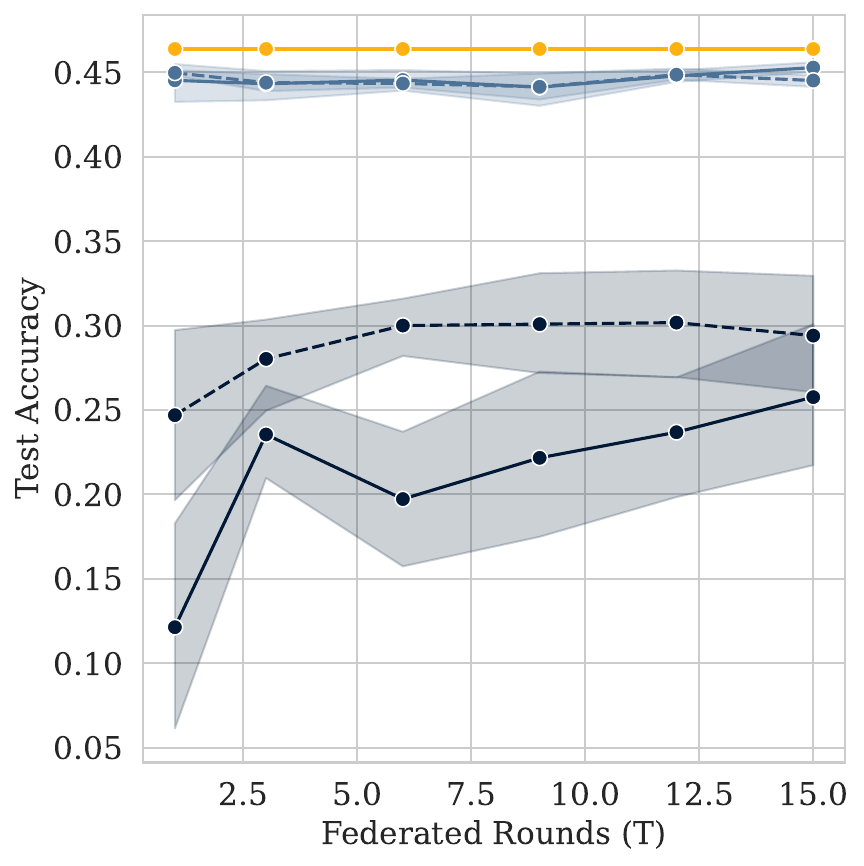}}
  \subfloat[CIFAR10 -- cwECE\label{appendix:varydp_cifar10_ece}]{%
    \includegraphics[width=0.32\linewidth]{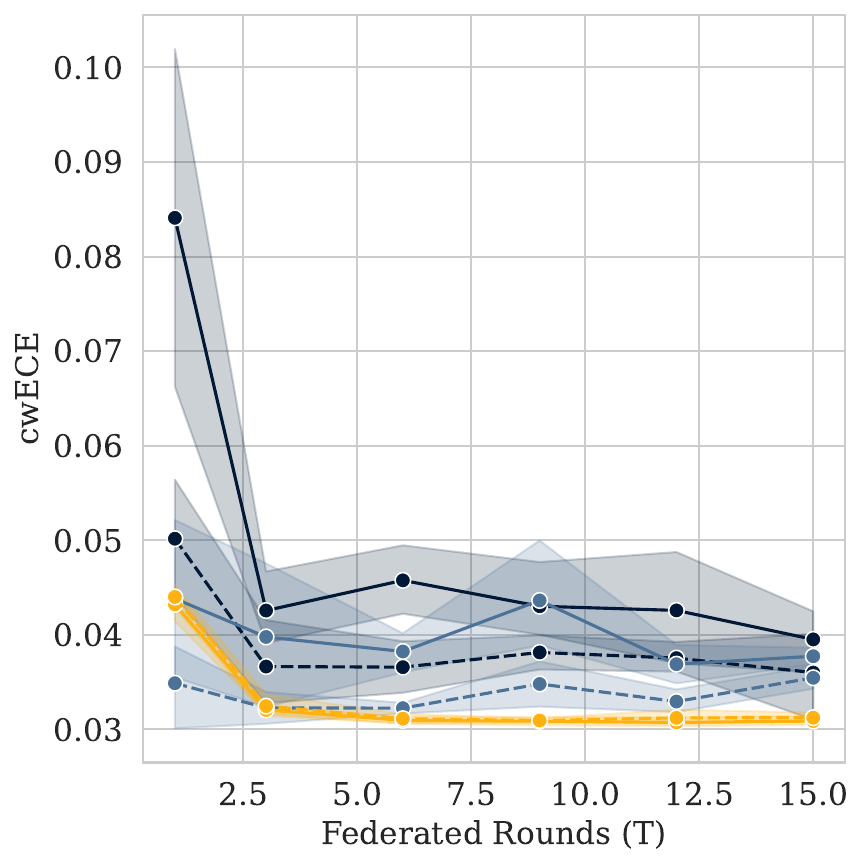}} \\
  \subfloat[CIFAR100 -- Test Accuracy\label{appendix:varydp_cifar100_acc}]{%
        \includegraphics[width=0.32\linewidth]{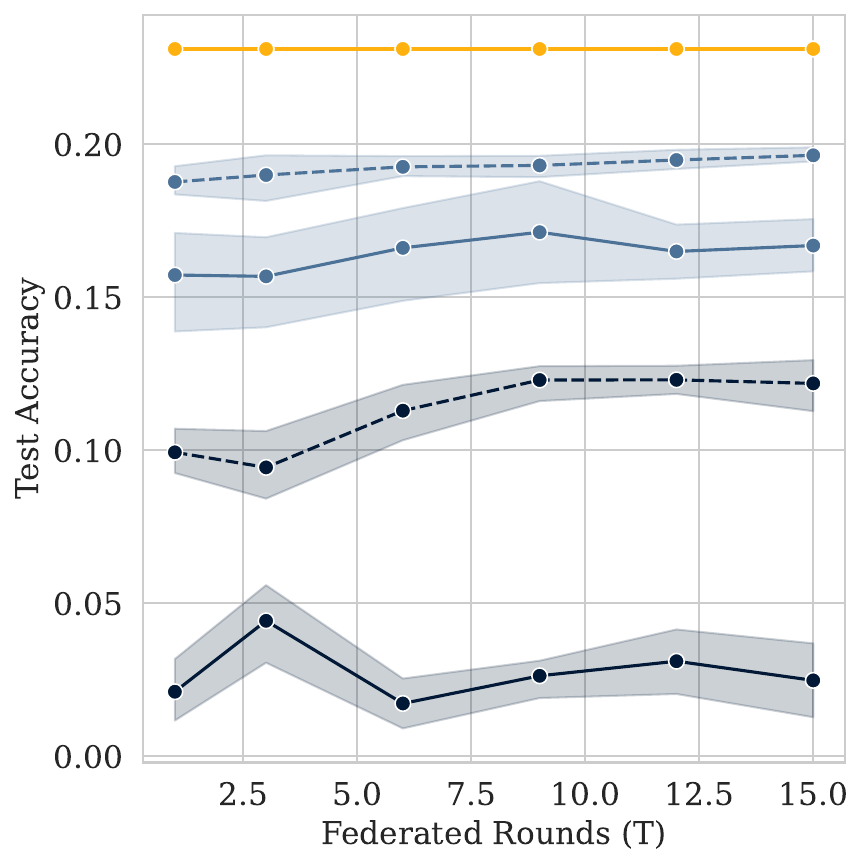}}
  \subfloat[CIFAR100 -- cwECE\label{appendix:varydp_cifar100_ece}]{%
    \includegraphics[width=0.32\linewidth]{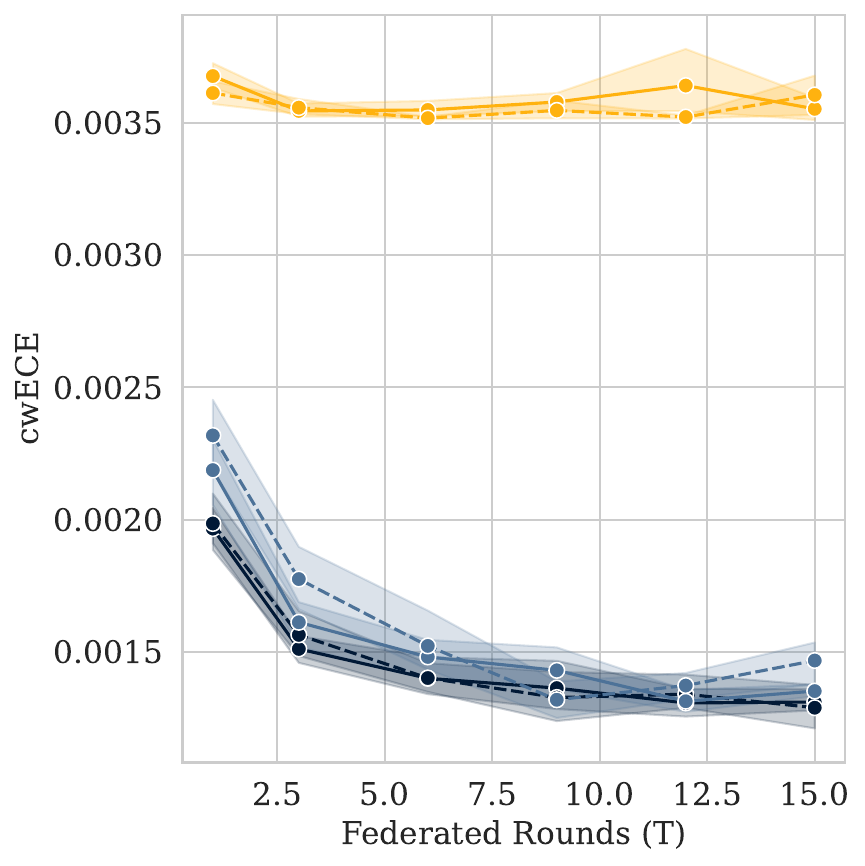}} \\
  \subfloat[FEMNIST -- Test Accuracy\label{appendix:varydp_femnist_acc}]{%
        \includegraphics[width=0.32\linewidth]{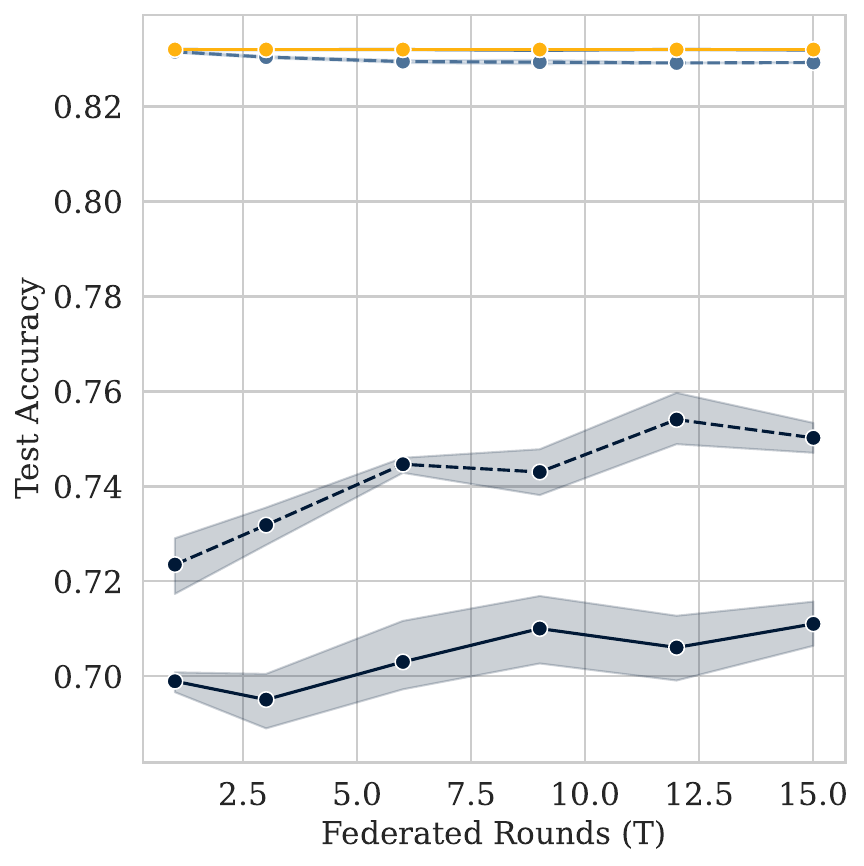}}
  \subfloat[FEMNIST -- cwECE\label{appendix:varydp_femnist_ece}]{%
    \includegraphics[width=0.32\linewidth]{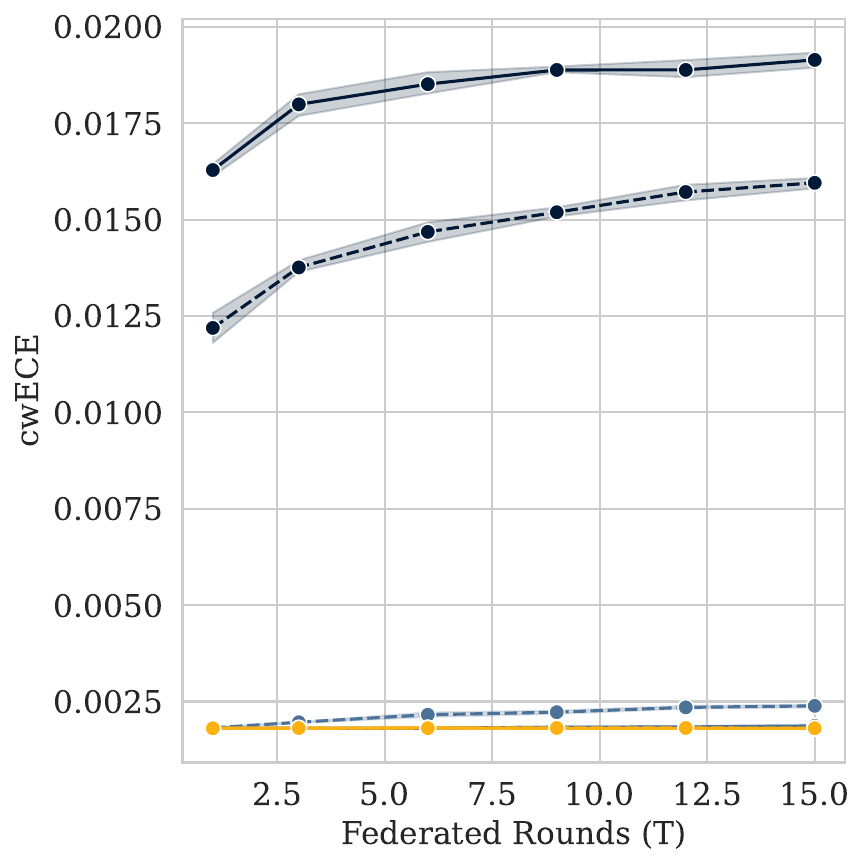}} \\
  \caption{DP-FL Calibration, Varying $T$ and $\eps$, $\beta=0.1$ except FEMNIST which uses LEAF. \label{appendix:dpvary}}
\end{figure*}

\subsubsection{DP-FL Calibration}\label{appendix:dp_fl}

\begin{figure*}[t!]
\centering
  \subfloat[CIFAR10 (no DP): cwECE\label{fig:nodp_cwece}]{%
        \includegraphics[width=0.32\linewidth]{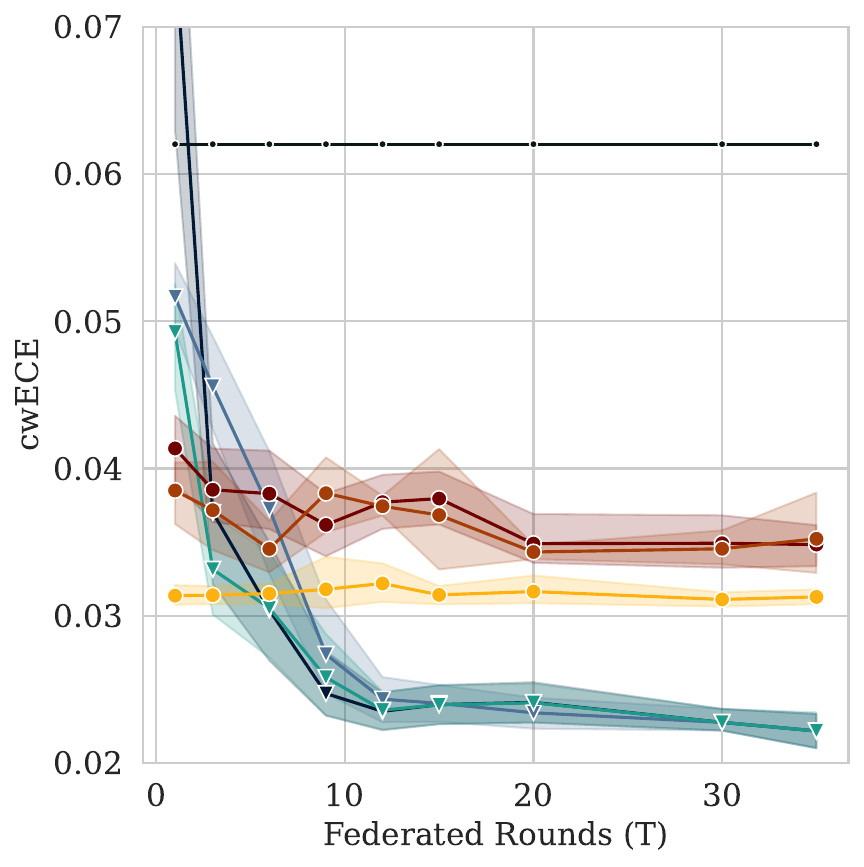}}
  \subfloat[CIFAR10 (no DP): ECE\label{fig:nodp_ece}]{%
    \includegraphics[width=0.32\linewidth]{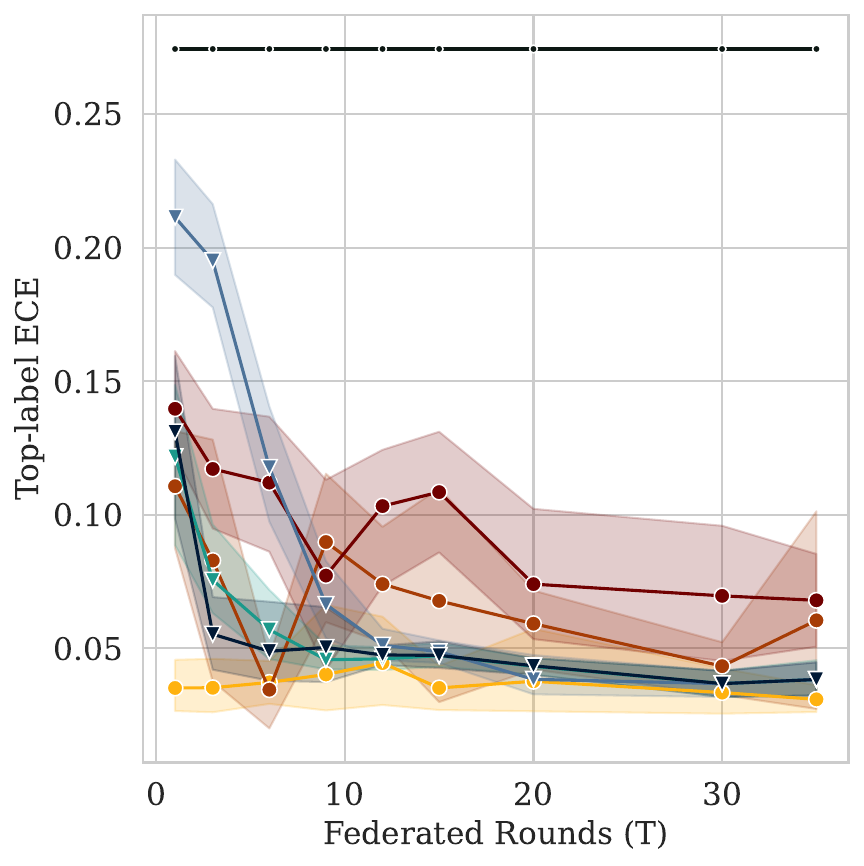}} \\
  \subfloat[CIFAR10 (DP): cwECE\label{fig:dp_cwece}]{%
    \includegraphics[width=0.32\linewidth]{figures/fig3a_cifar10_simple_classwise_ece.pdf}}
  \subfloat[CIFAR10 (DP): ECE\label{fig:dp_ece}]{%
    \includegraphics[width=0.32\linewidth]{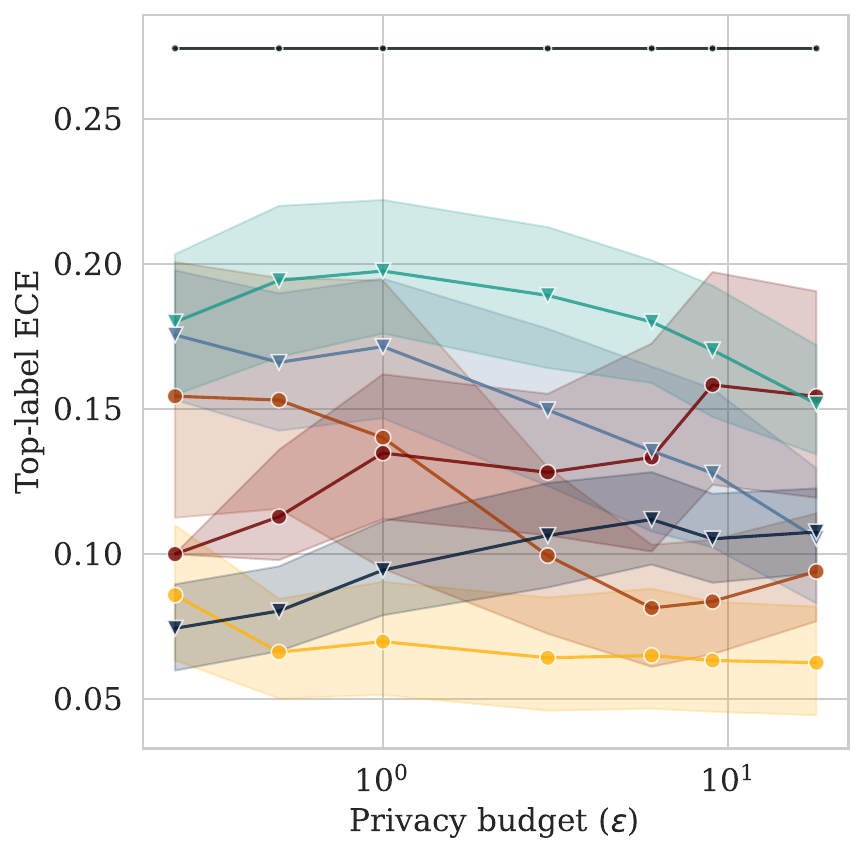}} 
    \caption{cwECE vs. ECE on CIFAR10 (Simple CNN, $\beta = 0.1$)\label{fig:ece}}
\end{figure*}

\paragraph{Replication of Figure~\ref{fig:4}} In Figure \ref{appendix:dpcifar} we present DP-FL calibration results for CIFAR100. Here we vary $\eps$ whilst studying both cwECE and test accuracy. Results here are consistent with our conclusions made in Section \ref{sec:exp} for DP-FL on CIFAR10, which is that, under strict DP  (small $\eps$), FedTemp performs best. However, CIFAR100 shows a clearer trend that as $\eps$ grows large, the FedBBQ methods perform best, highlighting that FedBBQ only works well in DP settings with little noise.

\paragraph{Varying $T$ and $\eps$.} In Figure \ref{appendix:dpvary} we vary both the number of rounds ($T$) and the privacy budget ($\eps$) for CIFAR10, CIFAR100 and FEMNIST. We study $\eps=1,3$. We clearly observe that our weighting method helps preserve the test accuracy of binning calibrators, especially under strict DP ($\eps=1$). For example, on CIFAR10 (Figure \ref{appendix:varydp_cifar10_acc}), the only method to significantly lose test accuracy after federated calibration are versions of FedBBQ without any weighting scheme. This is consistent on CIFAR100 and FEMNIST. For cwECE, we find FedTemp has consistent performance as both $T$ and $\eps$ is varied. However, as discussed in Section \ref{sec:exp}, as the binning methods are more sensitive to noise, their performance improves (often significantly) as both $T$ and $\eps$ increases. %

\begin{figure*}[t!]
\centering
    \begin{center}
        \includegraphics[width=0.4\linewidth]{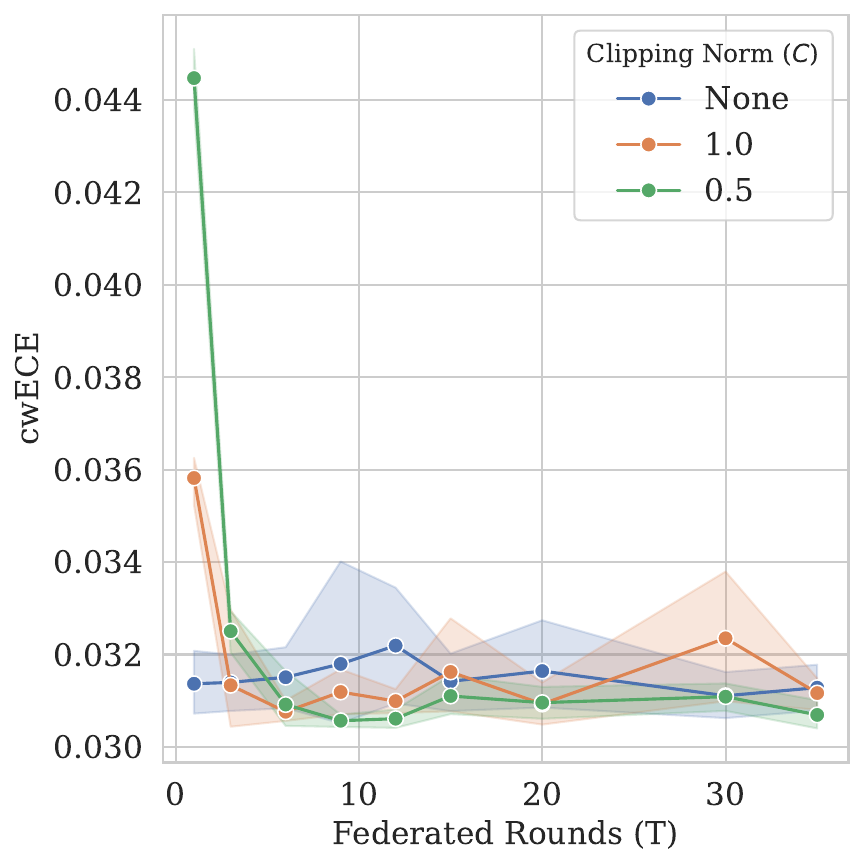}
    \end{center}
  \caption{FedTemp on CIFAR10 (Simple CNN, $\beta = 0.1$) - Varying clipping norm $C$\label{fig:temp_clip}}
\end{figure*}

\subsection{Top-label ECE vs. Classwise ECE}
\label{appendix:ece}

In Figure \ref{fig:ece}, we present the cwECE results shown for CIFAR10 (Simple CNN, $\beta = 0.1$) in the main paper where Figure \ref{fig3_ece} is the same as Figure \ref{fig:nodp_cwece} and Figure \ref{fig4_ece} is the same as \ref{fig:dp_cwece}. We present the equivalent figures for the (top-label) ECE metric. These demonstrate why we focus only on cwECE in our work. First, for the FL (no DP) setting in Figures \ref{fig:nodp_cwece} (cwECE) and \ref{fig:nodp_ece} (ECE) we see that training for multiple rounds results in both binning and scaling methods achieving similar ECE scores yet the cwECE results are very different, showing a gap between scaling and binning approaches. This highlights why we focus on cwECE as it tells a more complete picture for the federated setting. For the DP-FL setting in Figures \ref{fig:dp_cwece} (cwECE) and \ref{fig:dp_ece} (ECE) we find the ECE results are more consistent with the cwECE conclusions e.g., that scaling methods are preferred to binning approaches under DP.

\begin{table*}[t]
    \caption{Simulated communication overhead of local participants who participate in federated calibration on CIFAR10 (ResNet18). \label{tab:comm_cifar10} The parameter $c$ is the total number of classes, $N$ is the total number of hidden neurons in the FedCal MLP scaler and $M$ is the binning parameter in BBQ.}
    \centering
    \small
    \begin{tabular}{lllll}
    \toprule
     Calibrator & Communication Complexity & Send Size & Receive Size & Model Size \\
     & & (per-round) & (per-round) & (received once) \\
    \midrule
    FedTemp & O(1) & 0.01kb & 0.01kb & 43678.29kb \\
    FedOPVector & O(c) & 0.16kb & 0.16kb & 43678.29kb \\
    FedCal & O($N^2 + c^2$) & 43.08kb & 43.08kb & 43678.29kb \\
    FedBBQ & O($2^Mc$) & 20.0kb & 0.0kb & 43678.29kb \\
    FedBBQ, all weight & O($2^Mc$) & 20.0kb & 0.0kb & 43678.29kb \\
    \bottomrule
    \end{tabular}
    \vspace{-4pt}
\end{table*}
\begin{table*}[t]
    \caption{Local client per-round computation time for federated calibration on CIFAR10 (ResNet18). Time is measured in seconds. \label{tab:comp_cifar10}}
    \centering
    \small
    \begin{tabular}{llrrr}
    \toprule
     Calibrator & Client Time (Mean) & Client Time (Min) & Client Time (Max) \\
    \midrule
    FedBin & 0.005s & 0.003s & 0.008s \\
    FedBBQ & 0.032s & 0.030s & 0.036s \\
    FedBBQ, all weight & 0.039s & 0.031s & 0.049s \\
    \hline
    FedTemp & 0.111s & 0.099s & 0.124s \\
    FedOPVector & 0.017s & 0.013s & 0.026s \\
    FedCal & 0.008s & 0.006s & 0.015s \\
    \bottomrule
    \end{tabular}
    \vspace{-4pt}
\end{table*}
 \subsection{Computation and Communication Overhead}
 \label{appendix:overhead}

 In this section we present benchmarks for the computation and communication overhead of the federated calibration methods we consider in this work. 

For communication overhead, we present the per-round communication cost of a client participating in federated calibration across a range of methods on CIFAR100 in Table \ref{tab:comm_cifar10}. We do this in a simulated setting where we do not measure the communication overhead of using secure-aggregation protocols. Instead, these benchmarks are the raw communication cost of sending the calibration model parameters to and from the local clients. Observe the following:
 \begin{itemize}
     \item FedTemp and FedOPVector require sending 1 parameter (the temperature) and $2c$ parameters (the classwise scaling parameters) respectively. This results in very little per-round communication which is dwarfed by the cost of having to receive the federated classifier model from the server.
     \item FedBBQ and our FedBBQ variation with a weighting scheme has identical communication cost which requires sending a histogram of size $2^M$ for each positive and negative class resulting in $2\cdot (2^Mc)$ total communication. This is more costly than scaling approaches but better than FedCal as it does not need to receive any calibration model parameters from the server since FedBBQ just requires local histograms to be computed and sent to the server.
     \item FedCal requires sending a small neural network to and from the clients. We use the same architecture as was proposed in the original paper \citep{peng2024fedcal}. This has 2 hidden layers with 64 neurons but crucially scales with the number of classes $c$ (as the input and outputs of the MLP). This method has the largest overhead as it requires receiving and sending this MLP back to the server at each round.
 \end{itemize}

 In Table \ref{tab:comp_cifar10} we present the average, min and max computation time required by local clients when they participate in a single round of federated calibration. Observe that all methods are lightweight requiring at most $0.1$ seconds of local compute time. We observe that FedBBQ is more costly than FedBin as it requires computing larger histograms but this is still a lightweight procedure. For scaling, we observe that FedCal has the fastest training time compared to temperature and vector scaling. This difference arises because we follow the FedCal paper default setting of only $3$ training epochs trained via SGD, while our implementations of FedTemp and FedOPVector involve solving a convex optimization problem where we set a relatively large number of maximum iterations ($50$). Nevertheless, none of the methods introduce prohibitive computational costs, so this is not a limiting factor for federated calibration. Instead, the key consideration lies in selecting methods that achieve best calibration error whilst minimizing communication overhead as discussed above.
 
\section{Algorithm Details}
\subsection{Differential Privacy}\label{appendix:dp}

In this section, we present and prove the privacy guarantees of our federated calibration approaches under user-level DP. 
For completeness, we provide additional definitions and results, starting with the definition of $(\eps,\delta)$-Differential Privacy.
\begin{definition}[Differential Privacy~\citep{dwork2014foundations}]
A randomised algorithm $\mathcal{M}\colon \mathcal{D} \rightarrow \mathcal{R}$ satisfies $(\varepsilon, \delta)$-differential privacy if for any two adjacent datasets $D, D^\prime \in \mathcal{D}$ and any subset of outputs $S \subseteq \mathcal{R}$, $$\prob(\mathcal{M}(D) \in S) \leq e^\eps \prob(\mathcal{M}(D^\prime) \in S) + \delta.$$
\end{definition}
We will first provide guarantees with the more convenient $\rho$-zCDP formulation.
\begin{definition}[$\rho$-zCDP] \label{def:zcdp}
A mechanism $\mathcal{M}$ is $\rho$-zCDP if for any two neighbouring datasets $D, D^\prime$ and all $\alpha \in (1,\infty)$ we have $D_\alpha(\mathcal{M}(D) | \mathcal{M}(D^\prime) \leq \rho \cdot \alpha$,
where $D_\alpha$ is Renyi divergence of order $\alpha$.
\end{definition}

It is common to then translate this guarantee to the more interpretable $(\eps,\delta)$-DP via the following lemma.

\begin{lemma}[zCDP to DP \citep{canonne2020discrete}]
\label{lemma:cdp}
If a mechanism $\mathcal{M}$ satisfies $\rho$-zCDP then it satisfies $(\eps,\delta)$-DP for all $\eps > 0$ with $$\delta = \min_{\alpha > 1} \frac{\exp((\alpha-1)(\alpha\rho - \eps))}{\alpha - 1} \left(1-\frac{1}{\alpha}\right)^\alpha$$
\end{lemma}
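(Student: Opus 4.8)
The plan is to bound the hockey-stick divergence directly and then optimize over the Rényi order. Writing $P = \mathcal{M}(D)$ and $Q = \mathcal{M}(D^\prime)$ for two neighbouring datasets, I would first recall that the tightest $\delta$ for a given $\eps$ in the $(\eps,\delta)$-DP definition equals the hockey-stick divergence
$$\delta(\eps) = \sup_S \big(\prob(P \in S) - e^\eps \prob(Q \in S)\big),$$
and that this supremum is attained on the rejection region $S^\star = \{x : \tfrac{dP}{dQ}(x) \ge e^\eps\}$. Introducing the privacy-loss variable $Z = \log\frac{dP}{dQ}$ distributed under $P$, this yields the clean representation
$$\delta(\eps) = \E_{x\sim P}\big[(1 - e^{\eps - Z})_+\big].$$
Establishing this representation, including the measure-theoretic justification that $S^\star$ is the optimal event, is the first step.

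The second step connects the moment generating function of $Z$ to the zCDP hypothesis. By the definition of Rényi divergence, for every $\alpha > 1$,
$$\E_{x\sim P}\big[e^{(\alpha-1)Z}\big] = e^{(\alpha-1)D_\alpha(P\|Q)} \le e^{(\alpha-1)\alpha\rho},$$
where the inequality is exactly Definition \ref{def:zcdp}. This is the only place the zCDP assumption enters, converting a divergence bound into a usable exponential-moment bound.

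The crux is the third step: a pointwise envelope of the integrand. I would show that for every $\alpha > 1$ and every real $z$,
$$(1 - e^{\eps - z})_+ \le \frac{(\alpha-1)^{\alpha-1}}{\alpha^\alpha}\, e^{(\alpha-1)(z - \eps)}.$$
This reduces to a single-variable optimization: with $u = z - \eps$ the claim becomes $(1-e^{-u})_+ \le c\, e^{(\alpha-1)u}$, and maximizing $g(u) = (1-e^{-u})e^{-(\alpha-1)u}$ over $u > 0$ gives the optimal constant $c = \frac{(\alpha-1)^{\alpha-1}}{\alpha^\alpha}$, attained at $e^{-u} = 1 - \tfrac1\alpha$. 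Finding and verifying this \emph{tightest} constant is the main obstacle, since a loose envelope would not reproduce the exact expression in the statement.

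Combining the three steps gives
$$\delta(\eps) \le \frac{(\alpha-1)^{\alpha-1}}{\alpha^\alpha}\, e^{-(\alpha-1)\eps}\, \E_P\big[e^{(\alpha-1)Z}\big] \le \frac{(\alpha-1)^{\alpha-1}}{\alpha^\alpha}\, e^{(\alpha-1)(\alpha\rho - \eps)}.$$
A short algebraic rewrite, $\frac{(\alpha-1)^{\alpha-1}}{\alpha^\alpha} = \frac{1}{\alpha-1}\big(1-\tfrac1\alpha\big)^\alpha$, matches the claimed constant, and since the bound holds for every $\alpha > 1$ I would take the infimum over $\alpha$ (written as a minimum in the statement) to conclude.
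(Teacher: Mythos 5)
Your proof is correct and complete: the hockey-stick representation $\delta(\eps)=\E_{P}\bigl[(1-e^{\eps-Z})_+\bigr]$, the conversion of the zCDP hypothesis into the exponential-moment bound $\E_P[e^{(\alpha-1)Z}]\le e^{(\alpha-1)\alpha\rho}$, and the pointwise envelope with the tight constant $\frac{(\alpha-1)^{\alpha-1}}{\alpha^\alpha}=\frac{1}{\alpha-1}\bigl(1-\frac{1}{\alpha}\bigr)^{\alpha}$ (attained at $e^{-u}=1-\frac1\alpha$) all check out and combine to give exactly the stated formula after minimizing over $\alpha>1$. The paper itself states this lemma by citation without proof, and your argument is essentially the proof in the cited reference \citep{canonne2020discrete}, so the approaches coincide.
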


In this work we are concerned with \emph{user-level} differential privacy. In order to provide user-level DP guarantees we must bound the contribution of any one user. The following notion of (user-level) sensitivity captures this.

\begin{definition}[$L_2$ Sensitivity]
Let $f: \mathcal{D} \rightarrow \R^d$ be a function over a dataset. The $L_2$ sensitivity of $f$, denoted $\Delta_2(f)$, is defined as $\Delta_2(f) := \max_{D \sim D^\prime} \|f(D) - f(D^\prime)\|_2$, where $D \sim D^\prime$ represents the user-level relation between datasets i.e., $D^\prime$ is formed from the addition or removal of an entire user's data from $D$. 
\end{definition}

A standard mechanism for guaranteeing differential privacy guarantees on the numerical outputs of an algorithm is through the use of the Gaussian mechanism.

\begin{definition}[Gaussian Mechanism, GM]\label{def:gm}
Let $f: \mathcal{D} \rightarrow \R^d$, the Gaussian mechanism is defined as $GM(f) = f(D) + \Delta_2(f) \cdot \mathcal{N}(0, \sigma^2 I_d)$.
The GM satisfies $\frac{1}{2\sigma^2}$-zCDP.
\end{definition}

As the Gaussian mechanism is invoked at each federated round, we would also like to compose these privacy guarantees.

\begin{lemma}[zCDP composition \citep{bun2016concentrated}]
\label{lemma:comp}
If a mechanism $\mathcal{M}$ satisfies $\rho_1$-zCDP and mechanism $\mathcal{M}^\prime$ satisfies $\rho_2$-zCDP, then the composition $M^*(x) := M^\prime(M(x))$ satisfies $(\rho_1 +\rho_2)$-zCDP
\end{lemma}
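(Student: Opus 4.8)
The plan is to unfold the definition of $\rho$-zCDP in terms of R\'enyi divergence and to exploit the multiplicative structure of the R\'enyi integral under composition. Throughout I would interpret $M^*$ as the \emph{adaptive composition} that releases the pair $(Y_1,Y_2)$ with $Y_1=\mathcal{M}(D)$ and $Y_2=\mathcal{M}'(D)$ (the latter possibly depending on $Y_1$); this is the form for which the parameters add. I would flag at the outset that under the literal post-processing reading $\mathcal{M}'(\mathcal{M}(D))$, in which $\mathcal{M}'$ sees only the output of $\mathcal{M}$ and not $D$, the data-processing inequality for R\'enyi divergence would already give the \emph{stronger} conclusion of $\rho_1$-zCDP, so the additive bound is genuinely a statement about releasing both outputs.

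First I would fix neighbouring datasets $D \sim D'$ and an order $\alpha \in (1,\infty)$, and write the joint densities of the released pair as a marginal times a conditional: $p(y_1,y_2)=p(y_1)\,p(y_2\mid y_1)$ under $D$ and $q(y_1,y_2)=q(y_1)\,q(y_2\mid y_1)$ under $D'$. Recalling the identity $\int p^\alpha q^{1-\alpha} = \exp\!\big((\alpha-1)\,D_\alpha(P\|Q)\big)$, the goal reduces to bounding the integral $\int\!\int p(y_1,y_2)^\alpha\, q(y_1,y_2)^{1-\alpha}\,dy_1\,dy_2$ and then taking logarithms.

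The central step factors this integral along the product structure:
\begin{equation*}
\int\!\int p^\alpha q^{1-\alpha} = \int p(y_1)^\alpha q(y_1)^{1-\alpha}\left(\int p(y_2\mid y_1)^\alpha q(y_2\mid y_1)^{1-\alpha}\,dy_2\right) dy_1.
\end{equation*}
For each fixed $y_1$ the inner integral equals $\exp\!\big((\alpha-1)\,D_\alpha(\mathcal{M}'(D)\mid y_1 \,\|\, \mathcal{M}'(D')\mid y_1)\big)$, which the $\rho_2$-zCDP guarantee of $\mathcal{M}'$ bounds by $\exp((\alpha-1)\rho_2\alpha)$ \emph{uniformly} in $y_1$. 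Pulling this constant out leaves $\int p(y_1)^\alpha q(y_1)^{1-\alpha}\,dy_1 \le \exp((\alpha-1)\rho_1\alpha)$ by the $\rho_1$-zCDP guarantee of $\mathcal{M}$. Multiplying, taking logarithms, and dividing by $\alpha-1$ gives $D_\alpha(M^*(D)\|M^*(D'))\le(\rho_1+\rho_2)\alpha$; since this holds for every $\alpha>1$ and every neighbouring pair, $M^*$ is $(\rho_1+\rho_2)$-zCDP.

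The main obstacle is the uniform-in-$y_1$ control of the conditional R\'enyi divergence: I must argue that the $\rho_2$-zCDP property of $\mathcal{M}'$ applies to the conditionals $\mathcal{M}'(\cdot)\mid y_1$ for (almost) every value of the first output, i.e.\ that adaptivity does not break the bound. This is exactly where the definition of zCDP as a \emph{worst-case} bound over neighbouring inputs is essential: after conditioning on $y_1$, the two conditionals remain outputs of $\mathcal{M}'$ on neighbouring datasets, so $D_\alpha \le \rho_2\alpha$ transfers pointwise. The accompanying measure-theoretic bookkeeping (regular conditional distributions, a common dominating measure, Fubini for the factorisation) is routine and I would relegate it to a remark rather than work it out in full.
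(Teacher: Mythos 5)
The paper does not prove this lemma; it imports it verbatim from \citet{bun2016concentrated}, so there is no in-paper argument to compare against. Your proof is correct and is essentially the canonical one from that reference (and from the analogous R\'enyi-DP composition result): factor the joint density of the released pair into marginal times conditional, bound the inner R\'enyi integral uniformly in $y_1$ by the worst-case $\rho_2$-zCDP guarantee of $\mathcal{M}'$, pull out the constant, and apply the $\rho_1$ bound to the outer integral. Your opening remark is also well taken: the paper's notation $M^*(x) := \mathcal{M}'(\mathcal{M}(x))$ literally describes post-processing, for which the data-processing inequality already yields $\rho_1$-zCDP, so the additive bound only has content under the adaptive-composition reading you adopt --- which is indeed how the paper uses the lemma (composing $T$ dataset-touching Gaussian mechanism invocations in Lemma~\ref{lemma:final}), and that reading requires exactly the per-$y_1$ zCDP guarantee on $\mathcal{M}'$ that you isolate as the crux.
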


We assume a trusted honest-but-curious server adds Gaussian noise to the quantities that have been aggregated via secure aggregation as outlined in the threat model at the end of Section \ref{sec:fl}. For scaling, noise is added to the (clipped) parameters of the scaling model whilst for binning noise is added to the (clipped) histograms. For both scaling and binning, the training of the calibrator is thus the composition of the Gaussian mechanism over multiple federated rounds. We formalise this in Lemma~\ref{lemma:final}.

\begin{lemma}[Noise calibration for $(\eps, \delta)$-DP federated calibration]
\label{lemma:final}
For any number of federated calibration rounds $T$, federated scaling and binning approaches satisfy $(\eps, \delta)$-DP, by computing $\rho$ according to Lemma \ref{lemma:cdp} and setting 
\begin{align*}
    \sigma &= \begin{cases}
      C\sqrt{\frac{T}{2\cdot \cdot \rho}}, & \text{Scaling} \\
      C\sqrt{\frac{2cT}{2\cdot \rho}}, & \text{Binning}
    \end{cases}
\end{align*}
\end{lemma}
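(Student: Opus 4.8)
The plan is to write each federated calibrator as a composition of Gaussian mechanisms across the $T$ calibration rounds, calibrate the per-round noise so that the accumulated zCDP equals the target $\rho$ supplied by Lemma~\ref{lemma:cdp}, and then invoke that lemma to obtain $(\eps,\delta)$-DP. First I would fix a single round $t$ and identify the released quantity together with its user-level $L_2$ sensitivity. For scaling, the round output is the aggregate $A_t,\vb_t$ of client parameters, each clipped to norm $C$. Under user-level adjacency a single user's (clipped) contribution is added or removed, so the aggregate moves by at most $C$ in $L_2$ norm and the sensitivity is $\Delta_2 = C$. The essential point is that clipping makes this bound independent of how many examples the user holds, which is precisely what user-level DP demands.

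For binning, the round releases the $2c$ histograms $\{P_j,N_j\}_{j=1}^{c}$ aggregated over participating clients, each client histogram clipped to norm $C$. I would account for these either as $2c$ separately clipped-and-noised Gaussian mechanisms or, equivalently, as one concatenated vector whose user-level sensitivity is at most $\sqrt{2c}\,C$; both views give the same budget. Using the standard Gaussian mechanism guarantee (Definition~\ref{def:gm}, taken with noise parameter $\sigma/C$), releasing a statistic of sensitivity $C$ under additive noise $\mathcal{N}(0,\sigma^2 I)$ of standard deviation $\sigma$ costs $\tfrac{C^2}{2\sigma^2}$-zCDP. Hence one round costs $\tfrac{C^2}{2\sigma^2}$-zCDP for scaling and $\tfrac{2cC^2}{2\sigma^2}$-zCDP for binning.

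Next I would compose across the $T$ rounds. By the additive composition of Lemma~\ref{lemma:comp}, the total budget is $T$ times the per-round budget, namely $\tfrac{TC^2}{2\sigma^2}$ for scaling and $\tfrac{2cTC^2}{2\sigma^2}$ for binning. Setting each equal to $\rho$ and solving for $\sigma$ yields exactly $\sigma = C\sqrt{T/(2\rho)}$ and $\sigma = C\sqrt{2cT/(2\rho)}$, the two stated expressions. Since $\rho$ was fixed via Lemma~\ref{lemma:cdp} so that $\rho$-zCDP implies $(\eps,\delta)$-DP, the composed mechanism satisfies $(\eps,\delta)$-DP, completing the argument.

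The step I expect to be the main obstacle is pinning down the per-round sensitivity, not the arithmetic. For binning I must argue carefully that a user's total footprint across all $2c$ histograms is correctly captured by clipping, and that the factor $2c$ (equivalently the $\sqrt{2c}$ sensitivity of the concatenation) is the right accounting; with distinct positive and negative clipping norms $C_+,C_-$ this per-round cost becomes $\tfrac{c(C_+^2+C_-^2)}{2\sigma^2}$, which collapses to the stated form when $C_+=C_-=C$. I would also note that we deliberately do \emph{not} invoke privacy amplification by client subsampling, so the bound holds for any sampling rate $p$ and is conservative but valid.
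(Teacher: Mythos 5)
Your proposal is correct and follows essentially the same route as the paper's proof: treat each round as Gaussian mechanism releases with user-level sensitivity $C$ enforced by clipping, count $T$ invocations for scaling and $2cT$ for binning, compose additively in zCDP via Lemma~\ref{lemma:comp}, and convert to $(\eps,\delta)$-DP via Lemma~\ref{lemma:cdp}. Your added details---the equivalence between $2c$ separate mechanisms and one concatenated release of sensitivity $\sqrt{2c}\,C$, the $c(C_+^2+C_-^2)/(2\sigma^2)$ refinement for distinct clipping norms, and the explicit note that subsampling amplification is not invoked---are all sound elaborations of the paper's terser argument rather than a different approach.
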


\begin{proof} 
As the server adds noise to the quantities received from secure-aggregation, it suffices to apply Lemma \ref{lemma:comp} with Definition \ref{def:gm}. To find $\sigma$, we simply need to count the number of times the Gaussian mechanism is used. For scaling approaches, we train via DP-FedAvg which has clients send a clipped update with norm $C$ at each round. This is performed over $T$ federated rounds and hence the noise scales proportional to $T$.
For binning, the privacy guarantees are the same as above except each user sends two (clipped) histograms over their datasets, one for positive and one for negative examples. For the case of multi-class classification with $c > 2$, this happens for each class. Hence users send $2c$ histograms per federated round and so the noise scales proportional to $2cT$. 
\end{proof}

For settings where the client participation rate $p < 1$ we can use subsampling amplification via Renyi Differential Privacy (RDP) to get improved composition results, see \cite{mironov2017renyi} for more information.

\subsection{Federated Averaging}\label{appendix:fedavg}

We train all of our models and our scaling calibrators via variations of (DP)-FedAvg. At step $t$ of FedAvg training, we subsample clients from the population with probability $p$ and have them train their local model with current global weights $\vw^t$ via local SGD for a number of epochs, producing local weights $\vw_k^t$. Clients calculate a model update of the form
$ u_k := \vw^t - \vw_k^t$ which acts a pseudo-gradient. The server updates the global model of the form 
$\vw^{t+1} := \vw^{t} - \eta_S\cdot\frac{1}{p\cdot K}\sum_k (\vw^t - \vw_k^{t}) = \vw_k^{t} - \eta_S\cdot\frac{1}{p\cdot K}\sum_k u_k$, where $\eta_S$ is a server learning rate. In the context of DP-FedAvg, the individual model updates $u_k$ are clipped to have norm $C$ and aggregated by the server via secure-aggregation. Denoting $\bar{u}_k = \operatorname{clip}(u_k, C)$ the server computes the final noisy update of the form $\tilde u := \sum_k \bar{u}_k + N(0, C\sigma^2I_d)$

\subsection{Histogram Binning\label{appendix:binning}}

One of the simplest calibration methods is histogram binning \citep{zadrozny2002transforming}. In a binary classification setting, given the output confidence $\hat p_i \in [0,1]$, class-label $y_i \in \{0,1\}$ and a fixed total number of bins $M$, the output confidences of the model are partitioned using fixed-width bins of the form $B_m = [\frac{m-1}{M}, \frac{m}{M}]$ for $m \in \{1, \cdots, M\}$. The calibrator $g(\hat p)$ is then formed as 
$$g(\hat p) = \sum_m \mathbf{1}\{\hat p_i \in B_m\} \frac{P(m)}{P(m) + N(m)}$$

Where the entries of the class-positive histogram $P$ and class-negative histogram $N$ is defined as
$$P(m) := \sum_i \mathbf{1}\{y_i = 1  \wedge \hat p_i \in B_m\}$$
$$N(m) := \sum_i \mathbf{1}\{y_i = 0\ \wedge \hat p_i \in B_m\} $$

To extend this to a multiclass setting, we can learn $c$ one-vs-all calibrator models $g_1, \cdots g_c$ for each class. For a particular example $i$ and class $j$, with predicted class-confidence $\hat p_{i,j}$, the class-positive and negative histograms $P_j(m)$ and $N_j(m)$ are computed as
$$P_j(m) := \sum_i \mathbf{1}\{y_i = j  \wedge \hat p_{i,j} \in B_m\}$$
$$N_j(m) := \sum_i \mathbf{1}\{y_i \neq j\ \wedge \hat p_{i,j} \in B_m\} $$

The final calibrator $g(\cdot)$ is formed from the normalized one-vs-all calibrators as
$$g( \mathbf{\hat{p_i}}) := (g_1(\hat p_{i,1}), \cdots g_c(\hat p_{i,c})) / \sum_j g_j(\hat p_{i,j})$$

\subsection{Bayesian Binning Quantiles (BBQ)}\label{appendix:bbq}

The BBQ approach of \citet{naeini2015obtaining} extends histogram binning to consider multiple binning schemes at once. Given a histogram calibrator with $B$ total bins, BBQ assigns a score based on the Gamma function ($\Gamma(\cdot)$) of the form:
$$ \operatorname{Score}(M) := \prod_{b=1}^B \frac{\Gamma(\frac{N^\prime}{B})}{\Gamma(N_b + \frac{N^\prime}{B})} \frac{\Gamma(m_b + \alpha_b)}{\Gamma(\alpha_b)} \frac{\Gamma(n_b + \beta_b)}{\Gamma(\beta_b)} $$
where $N_b$ is the total number of samples in bin $b$, $n_b$ is the number of negative class samples in bin $b$, $m_b$ the number of positive class samples and $\alpha_b := \frac{2}{B}p_b, \beta_b := \frac{2}{B}(1-p_b)$ where $p_b$ is the midpoint of the interval defining bin $b$. Given a set of binning calibrators $M_1, \dots M_n$ with associated scores $s_1, \dots, s_n$, the final BBQ calibrator is formed from the weighted average
$M(p) := \frac{\sum_{i} s_i\cdot M_i(p)}{\sum_{i}s_i}$. In the multi-class setting we consider a one-vs-all setting where we train a BBQ calibrator for each class. In our federated setting, and to simplify things under DP, we consider a histogram calibrator with $2^M$ total bins. 
We have clients compute positive and negative histograms over each class for this binning scheme.
Then to utilize BBQ, we merge the $2^M$ bins of  
each histogram in powers of $2$ to create multiple histogram calibrators with total bin counts in the range $\{2, 4, \dots, 2^{M-1}, 2^M\}$ and assign the BBQ scores defined above. %
This defines our FedBBQ protocol. 

\subsection{Weighting Scheme for Binning Methods}
\label{appendix:weights}

In the non-DP setting we apply a classwise weighting scheme with $\alpha_j = \text{clip}(\frac{\tilde N_j}{N_j}, 1)$ where $\tilde N_j$ is the number of aggregated class $j$ examples and $N_j$ is the total number of class $j$ examples. Note that this weighting scheme requires no additional rounds of communication from clients as it is simply a post-processing step the server performs (summing a histogram) on the aggregated histogram received from clients. In the DP setting, knowledge about $N_j$ is private. 
We can still estimate $\tilde N_j$ under DP by summing all the counts from the (noisy) positive histogram for class $j$. 
We replace $N_j$ with the expected error of measuring the histogram under Gaussian noise, $\tilde \alpha_j := \operatorname{clip}(\tilde N_j / \sqrt{2/\pi} \sigma |P_j|, 1)$. In other words, if the signal-to-noise ratio of the histogram is low then we place less weight on the binning prediction.

\subsection{Order-preserving Training for Scaling Methods}
\label{appendix:op}
The method of \cite{rahimi2020intra} allows training a calibrator $g$ to be \emph{order-preserving}.

\begin{definition}[Order-preserving]
    We say a calibrator $g$ is order-preserving for any $\mathbf{p}$ if both $\mathbf{p}$ and $g(\mathbf{p})$ share the same ranking i.e., for all $i, j \in [c]$ we have $\mathbf{p}_i \geq \mathbf{p_j}$ if and only if $g_i(\mathbf{p}) \geq g_j(\mathbf{p})$
\end{definition}
This guarantees that the top-label prediction (i.e., top-1 accuracy) will not be changed after calibration. Rahimi et al. show that a necessary and sufficient condition for $g$ to be order-preserving is if $g$ is of the form 
\begin{equation}\label{eq:op}
    g(\mathbf{p}) := S(\mathbf{p})^{-1}U\mathbf{w}(\mathbf{p})
\end{equation}
 with $U$ being an upper-triangular matrix of ones, $S$ being a sorting permutation and $\mathbf{w}$ a function that satisfies
\begin{itemize}
    \item $\mathbf{w}_i(\mathbf{p}) = 0$ if $\mathbf{y}_i = \mathbf{y}_{i+1}$ and $i < n$
    \item $\mathbf{w}_i(\mathbf{p}) > 0$ if $\mathbf{y}_i > \mathbf{y}_{i+1}$ and $i < n$
\end{itemize}
where $\mathbf{y} := S(\mathbf{p})\mathbf{p}$ is the sorted version of the class-probabilities $\mathbf{p}$ (see Theorem 1 in \cite{rahimi2020intra}). Rahimi et al. further show that for any scaling calibrator $\mathbf{m}(\bold{p})$, this can be achieved by setting $\mathbf{w}(\mathbf{p}) := |\mathbf{y}_i - \mathbf{y}_{i+1}| \mathbf{m}(\mathbf{p})$ and plugging $\mathbf{w}(\mathbf{p})$ into (\ref{eq:op}) to obtain the final order-preserving calibrator $g$.

\end{document}